\newcommand\includegraphicscrop[1]{%
  \immediate\write18{pdfcrop -hires #1.pdf #1-crop.pdf}%
  \includegraphics{#1-crop.pdf}%
}
\newcommand{\currentfontsize}{\f@size pt}
\newcommandx{\customComment}[3]{\textcolor{#2}{\textsl{#1: #3}}}
\newcommandx{\customTodo}[3]{\textcolor{#2}{\textsl{#1: #3}}}
\newcommandx{\Thijs}[1]{\customComment{Thijs}{magenta}{#1}}
\newcommandx{\Praneeth}[1]{\customComment{Praneeth}{blue}{#1}}
\newcommandx{\Martin}[1]{\customComment{Martin}{brown}{#1}}
\newcommandx{\NeedsRef}[1]{\textcolor{red}{~(ref)}}
\newcommandx{\Todo}[1]{\customTodo{TODO}{red}{#1}}
\newcommandx{\ThijsTodo}[1]{\customTodo{Thijs}{magenta}{#1}}
\newcommandx{\PraneethTodo}[1]{\customTodo{Praneeth}{blue}{#1}}
\newcommandx{\MartinTodo}[1]{\customTodo{Martin}{brown}{#1}}
\DeclarePairedDelimiterX{\lin}[2]{\langle}{\rangle}{#1, #2}
\newcommand{\inp}[2]{#1 \circ #2}
\DeclarePairedDelimiterX{\abs}[1]{\lvert}{\rvert}{#1}
\DeclarePairedDelimiterX{\norm}[1]{\lVert}{\rVert}{#1}
\DeclarePairedDelimiterX{\cbr}[1]{\{}{\}}{#1} 
\DeclarePairedDelimiterX{\rbr}[1]{(}{)}{#1} 
\DeclarePairedDelimiterX{\sbr}[1]{[}{]}{#1} 
\providecommand{\mleq}{\preceq}
\providecommand{\R}{\mathbb{R}} 
\providecommand{\real}{\mathbb{R}} 
\DeclareMathOperator{\expect}{\mathbb{E}}
\DeclareMathOperator{\sgn}{sign}
\def\sign{\@ifnextchar*{\@sgnargscaled}{\@ifnextchar[{\sgnargscaleas}{\@ifnextchar{\bgroup}{\@sgnarg}{\sgn} }}}
\def\@sgnarg#1{\sgn\rbr{#1}}
\def\@sgnargscaled#1{\sgn\rbr*{#1}}
\def\@sgnargscaleas[#1]#2{\sgn\rbr[#1]{#2}}
\DeclareMathOperator*{\Tr}{Tr}
\providecommand{\1}{\mathbf{1}}
\providecommand{\pp}{\mathbf{p}}
\providecommand{\uu}{\mathbf{u}}
\providecommand{\vv}{\mathbf{v}}
\providecommand{\xx}{\mathbf{x}}
\providecommand{\xiv}{\boldsymbol{\xi}}
\providecommand{\mD}{\mathbf{D}}
\providecommand{\mG}{\mathbf{G}}
\providecommand{\mQ}{\mathbf{Q}}
\providecommand{\mU}{\mathbf{U}}
\providecommand{\mW}{\mathbf{W}}
\providecommand{\mX}{\mathbf{X}}
\providecommand{\mY}{\mathbf{Y}}
\providecommand{\mDelta}{\bm \Delta}
\providecommand{\cC}{\mathcal{C}}
\providecommand{\cL}{\mathcal{L}}
\providecommand{\cN}{\mathcal{N}}
\providecommand{\cO}{\mathcal{O}}
\providecommand{\cQ}{\mathcal{Q}}
\providecommand{\cR}{\mathcal{R}}
\providecommand{\cS}{\mathcal{S}}
\newtheorem{theorem}{Theorem}
\newtheorem{lemma}{Lemma}
\newcommand{\e}{\varepsilon}
\newcommand{\myitem}[1]{%
\item[\textbf{(#1)}]\protected@edef\@currentlabel{#1}%
}
\providecommand{\nWorkers}{n}
\providecommand{\dpsgd}{DP-SGD\xspace}
\providecommand{\compr}{\cC}
\newcommandx*\prm[2][1=x]{\mX_{#2}^{\ifthenelse{ \equal{#1}{x} }{}{(#1)}}}
\newcommand{\eg}{e.g.\xspace}
\newcommand{\ie}{i.e.\xspace}
\newcommand{\tablefontsize}{\scriptsize}
\title{Practical Low-Rank Communication Compression \\in Decentralized Deep Learning}
\author{%
  Thijs Vogels \\
  EPFL \\
  \texttt{thijs.vogels@epfl.ch} \\
  \And
  Sai Praneeth Karimireddy \\
  EPFL \\
  \texttt{sai.karimireddy@epfl.ch} \\
  \And
  Martin Jaggi \\
  EPFL \\
  \texttt{martin.jaggi@epfl.ch} \\
}
\begin{document}

\maketitle

\begin{abstract}
  Lossy gradient compression has become a practical tool to overcome the communication bottleneck in centrally coordinated distributed training of machine learning models. 
  However, algorithms for decentralized training with compressed communication over arbitrary connected networks have been more complicated, requiring additional memory and hyperparameters.
  We introduce a simple algorithm that directly compresses the model differences between neighboring workers using low-rank linear compressors applied to model differences.
  Inspired by the PowerSGD algorithm for centralized deep learning~\citep{vogels2019powersgd}, this algorithm uses power iteration steps to maximize the information transferred per bit.
  We prove that our method requires no additional hyperparameters, converges faster than prior methods, and is asymptotically independent of both the network and the compression.
  Out of the box, these compressors perform on par with state-of-the-art tuned compression algorithms in a series of deep learning benchmarks.\hfill\\
  This paper's code is available at \url{https://github.com/epfml/powergossip}.
\end{abstract}

\section{Introduction} \label{sec:intro}

The major advances in machine learning in the last decade have been made possible by very large datasets collected by multifaceted organizations.
We live in a society where almost every individual owns electronic devices that collect huge amounts of data, which---when used collaboratively---could lead to transformative insights~\citep{nedic2020review}.
Often this data is bound to the device it is captured on.
This might be for practical reasons of communication efficiency, or for more fundamental reasons such as privacy constraints.

Decentralized machine learning enables collaborative processing of this new kind of data.
In this paradigm, devices (nodes) have their own local data.
The nodes joinly train a model by minimizing a loss function on their joint dataset.
To do so, nodes communicate in a peer-to-peer fashion without any central coordination.
A node can only communicate with few `neighbor' nodes.
This decentralized approach is not only useful in fundamentally decentralized systems, but the sparse communication patterns can sometimes even lead to efficiency gains in a datacenter~\citep{assran2019sgp}.

In bringing decentralized optimization algorithms into the realm of deep learning, the more-than gigabytes large model parameters and gradients~\citep{rajbhandari2019zero,brown2020gpt3} have spurred interest in communication compression techniques to reduce the bandwidth requirements of training such models.
While practical plug-and-play compressors already exist for communication in centralized deep learning~\citep{seide2014onebit,vogels2019powersgd} that can retain full model quality at significant communication reductions, current compression algorithms in decentralized optimization require the tuning of additional hyperparameters.
This is unfortunate, since running many experiments to tune these hyperparameters is especially challenging and costly in a decentralized environment.

In this paper, we study a specific class of low-rank compressors for decentralized optimization inspired by \citep{vogels2019powersgd,cho2019gradzip} that are reliable and require no tuning.
Like in their work, we consider model parameters as matrices $\mX$.
Each pair of connected nodes $(i,j)$ repeatedly estimates the difference between their parameters $\prm{i} - \prm{j}$ through low-rank approximation.
These approximations can be made without communicating full matrices due to the linearity of power iteration steps.

We validate these plug-and-play compressors on decentralized image classification and language modeling tasks, and show that we can achieve competitive performance to other methods that require additionally tuned hyperparameters.
This allows users to tune a learning rate in a simpler centralized setup, and then transition to decentralized learning without extra effort.
We prove hyperparameter-free convergence on a subclass of random low-rank approximations. For consensus, our method converges faster than prior methods~\citep{koloskova2019choco}. For stochastic optimization, our rates are asymptotically independent of the compression rate.

\section{Related work} \label{sec:related-work}

\paragraph{Communication compression in centrally coordinated learning.}
Communication compression is an established approach to alleviate the communication bottleneck in parallel optimization in deep learning. 
While \cite{alistarh2017qsgd,wen2017terngrad, seide2014onebit,bernstein2019majorityvote,karimireddy2019ef} study gradient quantization, 
it is also possible to only send gradient coordinates with the largest absolute values \cite{lin2018deepgradcomp,stich2018memory,wangni2018sparsification}.

It has become clear that linear compression operators are practical in the centralized setting because they enable efficient all-reduce aggregation~\citep{yu2018gradiveq,vogels2019powersgd,cho2019gradzip}. 
\cite{ivkin2019sketching} use linear sketches to detect which parameter coordinates change most in a distributed setting.
\cite{wang2018atomo} observed that gradients in deep learning can be well approximated as low-rank matrices. 

The PowerSGD algorithm~\citep{vogels2019powersgd}, on which this work is based, is both linear and low-rank and performed well in a recent benchmark~\citep{xu2020survey}.
An iteration of PowerSGD makes a low-rank approximation of the average error-corrected gradient across workers.
The proposed decentralized scheme ``PowerGossip'' makes separate approximations for each pair of connected neighbors, directly approximating their pairwise model differences.

\paragraph{Decentralized optimization.}
Decentralized, or `gossip'-based, optimization has been studied for many years~\citep{tsitsiklis1984decentralized}.
Popular methods include those based on (stochastic) subgradient descent~\citep{nedic2009dgd} on node's local objective functions and with averaging between sparsely connected neighbors.
\cite{lian2017dpsgd} evaluated the effectiveness of such schemes in the non-convex setting.

\cite{tang2018dcd_ecd} extend decentralized optimization with compressed communication, but require relatively high precision compression to ensure convergence.
\cite{koloskova2019deep} and \cite{tang2019deepsqueeze} alleviate this constraint, supporting arbitrary-strength compression.
\cite{yu2020moniqua} study a compression based on the assumption that model differences across connected nodes are coordinate-wise bounded.
However, the abovementioned methods introduce additional hyperparameters specific to compression (\eg the consensus stepsize)---an inconvenience we overcome in this work.


\section{Decentralized machine learning} \label{sec:decentralized-ml}

Decentralized multi-worker training of machine learning models has two key characteristics.
Firstly, there is no central `master' node and nodes can only communicate with a limited number of neighbors.
This can either be a physical limitation of the network, or it can be desirable for performance.
In a datacenter, sparse, decentralized connectivity leads to excellent scalability~\citep{assran2019sgp}.
The second characteristic is distributed data: each worker has their own data that potentially come from non-identical distributions.
This can also be a hard limitation (\eg to protect privacy), or it can be desirable for co-locality of computation and data.

The setup is formalized as follows: $\nWorkers$ worker nodes aim to collectively minimize a loss function
\[
  f(\mX) \coloneqq \frac{1}{\nWorkers} \sum_{i=1}^n f_i(\mX), \quad f_i(\mX) \coloneqq \expect_{\xiv_i \sim D_i} F_i(\mX, \xiv_i)
\]
over model parameters $\mX$, where $f_i(\cdot)$ are smooth potentially non-convex loss functions over local data distributions $D_i$.
We  assume that $\mX \in \R^{p \times q}$ where $p$ represents the size of the `input' and $q$ is the output size.
For linear models, this matrix representation is natural.
For multi-layer networks, each weight and bias is considered separately, and for convolutional layers, $q$ represents the number of input layers and the kernel size and $p$ is the number of output channels.

The network topology is represented by an undirected connected graph $G$ that connects nodes $i$ with their neighbors $\mathcal{N}_i$ (including self-links).
Communication between nodes $i$ and $j$ is typically weighted by the $i,j$-th entry of a \emph{mixing matrix} $\mW \in \R^{n, n}$ which is non-zero only for connected nodes.
This matrix is chosen such that for any scalars $\vv \in \R^n$ held by the nodes, repeated averaging (gossip) between connected nodes, $\mW
 \vv$, gradually leads to consensus, $\vv_i \to \frac{1}{n}\sum_{i=1}^n \vv_i\; \forall i$.

In stochastic gradient-based optimization, each worker typically has its own model parameters $\prm{i}$.
Gossip averaging is used to bring the $\prm{i}$'s closer together and share information between nodes, while local stochastic gradient updates change $\prm{i}$ to fit local data.
Our methods builds on the elegant \dpsgd algorithm~\citep{lian2017dpsgd}.
In \dpsgd, for each timestep $t$ and each worker $i$,
\begin{align}
  \prm[t+1]{i} \coloneqq \prm[t]{i} - \eta \nabla f_i(\prm[t]{i}, \xiv_{i,t}) + \sum_{j \in \mathcal{N}_i} W_{ij} \left( \prm[t]{j} - \prm[t]{i} \right),
  \label{eq:dpsgd}
\end{align}
where $\eta_i$ is the learning rate and $\xiv_{i,t} \sim D_i$ represents a local data point.
Note that each step requires sending and receiving the full model parameters between all pairs of connected neighbors, but that this communication can be overlapped with computation of the stochastic gradient.

\section{Algorithm} \label{sec:algorithm}

Naively applying lossy communication compression (quantization / sparsification) to the gossip update in Eq.~\eqref{eq:dpsgd} leads to non-convergence.
To support arbitrary compression, prior approaches introduce algorithmic modifications and additional hyperparameters to tune~\citep{koloskova2019choco,tang2019deepsqueeze,tang2018dcd_ecd}.
In this section, we introduce PowerGossip, a compressed consensus algorithm based on low-rank approximations and power iteration that does not suffer from these issues.
Low-rank decomposition has already been shown to perform well in centralized deep learning~\citep{vogels2019powersgd,cho2019gradzip,xu2020survey}, and we find that they can be competitive with expensively tuned quantization- or sparsification-based algorithms for decentralized training as well.

PowerGossip is based on the premise that
$\compr_\vv(\mX) \coloneqq (\mX \vv) \vv^\top$, for a matrix $\mX\in \R^{p\times q}$ and vector $\vv \in R^q$ with $\norm{\vv}_2=1$, can be a reasonable low-rank approximation of $\mX$ that can be communicated with only $p$ floats instead of $p \times q$, given that all parties know $\vv$.
For the large weight matrices in deep learning, this reduction is significant.
For a random $\vv$, $\compr_\vv$ is a random projection, while for $\vv$ being the top right singular vector, $\compr_\vv(\mX)$ is the best rank-1 approximation of $\mX$ in the Frobenius norm.

We use the low-rank compressor $\compr_\vv$ to reduce communication in the gossip part of Eq.~\eqref{eq:dpsgd}:
\begin{align}
  \prm[t+1]{i} \coloneqq \prm[t]{i} + \sum_{j \in \mathcal{N}_i} W_{ij}\, \compr_{\vv_{ij}}(\prm[t]{j} - \prm[t]{i}),
\end{align}
for a time-varying vector $\vv_{ij}$ shared between each pair of connected workers.
Due to linearity, $\compr_\vv(\prm{j}-\prm{i})=(\prm{j}-\prm{i})\vv \vv^\top = (\prm{j}\vv-\prm{i}\vv) \vv^\top$.
Therefore, the compressed difference can be computed jointly by nodes $i$ and $j$ without ever communicating the full $\prm{j} - \prm{i}$.
Thus any nodes $i$ and $j$ only need to exchange vectors instead of matrices.

The approximation quality of $\compr_\vv$ depends on the choice of the projection vector $\vv$, and we leverage the mechanism of power iteration to find good ones.
Every time ($k$) the compressor $\compr_\vv$ is used on some parameter difference $\mD^{(k)} \coloneqq \prm[k]{j} - \prm[k]{i}$, we choose $\vv^{(k)}$ based on the previous low-rank approximation. Starting with a random initial vector $\vv^{(0)}$, we use
\begin{align}
  \vv^{(2k+1)} \coloneqq \frac{\mD^{(2k)} \vv^{(2k)}}{\norm{\mD^{(2k)} \vv^{(2k)}}}, \qquad
  \vv^{(2k)} \coloneqq \frac{\mD^{(2k-1)\top} \vv^{(2k-1)}}{\norm{\mD^{(2k-1)\top} \vv^{(2k-1)}}},  \qquad \forall k \in \mathbb{Z}_{\geq 0}.
  \label{eq:power-iteration}
\end{align}
If $\prm[k]{j} - \prm[k]{i}$ changes slowly over time, this procedure approaches power iteration and it finds the top eigenvector $\vv$.
This approach empirically leads to better approximations and faster convergence than compression with random projections.

Algorithm~\ref{alg:edge-wise-sgd} describes how we use PowerGossip for stochastic optimization. Algorithm~\ref{alg:power-gossip} presents the details of our compression scheme.

\begin{algorithm}
  \caption{Decentralized SGD with edge-wise compression}%
  \label{alg:edge-wise-sgd}
  \begin{algorithmic}[1]
    \State{\textbf{input} model parameters $\prm[0]{i} \in \R^{p \times q}$ for each node $i$ out of $\nWorkers$, randomly initialized identically}
    \State{\textbf{given} a symmetric, doubly stochastic, diffusion matrix $\mW \in \R^{N \times N}$}
    \State{\textbf{given} a compressor $\compr$ that can approximate $\prm{i} - \prm{j}$ with little communication}
    \For{each timestep $t$ \textbf{at} each worker $i$}
    \State $\mG \gets$ a stochastic gradient $\nabla f(\prm[t-1]{i}, \xiv_{i,t})$ for mini-batch $\xiv_{i, t}$
    \State $\prm[t]{i} \gets \prm[t-1]{i} + \sum_{j \in \mathcal{N}_i} W_{ij} \compr(\prm[t-1]{j} - \prm[t-1]{i}) - \eta \cdot \mG$
    \EndFor
  \end{algorithmic}
\end{algorithm}

\begin{algorithm}
\caption{Rank-1 $s$-step PowerGossip compression for Algorithm~\ref{alg:edge-wise-sgd}}\label{alg:power-gossip}
\begin{algorithmic}[1]
  \State{\textbf{initialize} a projection vector $\vv_{ij}=-\vv_{ji} \in \R^{q}$ 
   for each pair of connected nodes $i,j$, initialized from an entry-wise standard normal distribution, stored on nodes $i$ and $j$. Initialize $k \gets 0$.}
  \Procedure{$\compr$}{$\prm{j} - \prm{i}$}
    \For{$s$ power iteration steps}
      \State \textbf{increment} $k \gets k + 1$
      \If{$k \equiv 1 \text{ mod } 2$}
        \State $\hat \vv \gets \frac{\vv_{ij}}{\norm{\vv_{ij}}}$
        \State $\pp_j \gets \prm{j} \hat \vv, \qquad \pp_i \gets \prm{i} \hat \vv$ \Comment{computed on nodes $i$ and $j$}
        \State $\hat \mQ \gets (\pp_j - \pp_i) \hat \vv^\top$
        \State $\vv_{ij} \gets \pp_j - \pp_i$ \Comment{$\vv_{ij}$ changes between $\R^p$ and $\R^q$}
      \Else
        \State do the same, but with $\mX$ transposed as in Eq.~\eqref{eq:power-iteration}.
      \EndIf
    \EndFor
    \State \Return the approximation $\hat \mQ$
  \EndProcedure
  \State \textbf{note} that computations of $\compr(\prm{j} - \prm{i}) = -\compr(\prm{i} - \prm{j})$ overlap and share communication.
\end{algorithmic}
\end{algorithm}

\subsection{Properties} \label{sec:properties}


\paragraph{Linearity.}
Due to the linearity of matrix multiplication, we can compute a matrix-vector product $(\prm{i} - \prm{j}) \vv$ with matrices stored on different workers in a distributed fashion as $(\prm{i} \vv) - (\prm{j} \vv)$. This circumvents communication of matrices by sending much smaller vectors instead.
By compressing the differences of the models, we ensure that the models get closer to the average in every step without the need for additional `consensus stepsize' like prior protocols.
In particular, if two workers agree on the parameters and their difference is 0, then the compressed update will also be 0. This ensures that consensus is always a fixed-point of our method for arbitrary-strength compressors.

\paragraph{Low-rank compression.}
PowerGossip approximates differences between model parameters by low-rank matrices.
The quality of these approximations depends on the power spectra of the differences.
Similar to how top-$k$ compression---which approximates a vector by its top $k$ coordinate in absolute value, and zeros otherwise---works best when a few coordinates are much larger than the rest,
low-rank compression can leverage the peaky power spectra found in deep learning~\citep{vogels2019powersgd,cho2019gradzip} to maximize information sent per bit.
Our experiments in Section~\ref{sec:experimental} confirm that low-rank compression
is competitive with quantization- or sparsification-based approaches, 
while keeping our algorithm simple and free of hyperparameters.

\paragraph{Memory and computation complexity.}
The linear projection operations in PowerGossip are well suited for accelerator hardware used in deep learning~\citep{vogels2019powersgd,cho2019gradzip,xu2020survey}, and are typically even faster than compression based on random sparsification or quantization.
Like in \dpsgd~\citep{lian2017dpsgd}, this computation and the communication between nodes can be overlapped with gradient computation.
Storing the previous projection vectors $\vv$ requires memory linear in the number of connections per worker, but
these vectors are very small compared to a full model (0.1--2\% of the full model in our experiments).
This yields lower memory usage than competing methods ChocoGossip~\cite{koloskova2019deep} and DeepSqueeze~\cite{tang2019deepsqueeze}.

\section{Theoretical analysis} \label{sec:theory}

\subsection{Assumptions and setup}

\paragraph{Loss functions.}
We make standard assumptions about our loss functions.
Note that our analysis covers both functions satisfying \eqref{asm:strong-convexity}, as well as more general non-convex functions which do not.
\begin{enumerate}[leftmargin=3\baselineskip]
  \myitem{A1}
  \label{asm:strong-convexity} $f_i$ is \textbf{$\mu$-convex} for $\mu \geq 0$ if it satisfies for any $\mX$, and $\mX^\star$ minimizing $f$
  \begin{equation*}
\inp{\nabla f_i(\mX)}{(\mX^\star - \mX)} \leq - \rbr*{f_i(\mX) - f_i(\mX^\star) + \frac{\mu}{2}\norm{\mX - \mX^\star}^2_F}\,.
  \end{equation*}
  \myitem{A2}\label{asm:smoothness} We assume $\{f_i\}$ are \textbf{$L$-smooth} and thus satisfy:
  \begin{equation*}
    \norm{\nabla f_i(\mX) - \nabla f_i(\mY)}_F \leq L \norm{\mX - \mY}_F\,, \text{ for any } i, \mX\,, \mY\,.
  \end{equation*}
  \myitem{A3} \label{asm:heterogeneity} \textbf{Bounded variance:} We assume there exist constants $\sigma^2$ and  $\zeta^2$ which bound the variance within and across different nodes, \ie for any $\mX$ we have
  \begin{equation*}\textstyle
    \expect_{\xiv_i \sim D_i} \norm{\nabla F_i(\mX, \xiv_i) - \nabla f_i(\mX)}^2_F \leq \sigma^2 \quad \text{and} \quad \frac{1}{N}\sum_{i=1}^N \norm{\nabla f_i(\mX) - \nabla f(\mX)}_F^2 \leq \zeta^2 \,.
  \end{equation*}
\end{enumerate}
Assumption \ref{asm:strong-convexity} is known as \emph{star-convexity} and is weaker than the usual definition of convexity \citep{stich2019delayed}.
While \ref{asm:heterogeneity} requires both the variance within each node as well as across the nodes be bounded, we allow for heterogeneous (non-iid) data distributions across the nodes.

\paragraph{Communication network.} We assume that we are given a mixing matrix $\mW \in \real^{\nWorkers \times \nWorkers}$ and an underlying communication network over $\nWorkers$ nodes $([\nWorkers],E)$ satisfying \eqref{asm:mixing}:
\begin{enumerate}[leftmargin=3\baselineskip]
  \myitem{A4}\label{asm:mixing} $W_{ij} \neq 0 $ only if $(i,j) \in E$, and $\mW \in \real^{\nWorkers\times \nWorkers}$ is symmetric ($\mW^\top \!=\! \mW$) and doubly stochastic ($\mW \1 \!=\! \1, \1^\top \mW \!=\! \1^\top$). Further, $\mW^2$ has eigenvalues $1= \lambda_1^2 \geq \lambda_w^2 \geq \dots \lambda_\nWorkers^2$ with \textbf{spectral gap} $\rho \coloneqq 1 - \lambda_2^2 > 0$.
\end{enumerate}
Assumption \eqref{asm:mixing} characterizes the mixing matrix $\mW$ for decentralized optimization and controls the rate of information spread in the network \citep{lian2017dpsgd,pu2018dsgt}.
If $\mW$ satisfies \eqref{asm:mixing} for $\rho > 0$, then the underlying communication network is undirected and strongly connected. 

\paragraph{Compression operators.}
We introduce a new class of compression operators $\cC(\cdot)$ and assume that every compressor used in Algorithm~\ref{alg:edge-wise-sgd} satisfies \eqref{asm:compression}:
\begin{enumerate}[leftmargin=3\baselineskip]
  \myitem{A5}\label{asm:compression} We assume that $\cC$ is a $\delta$-approximate unbiased \textbf{linear projection} operator \ie for any $\mX$ and $\mY$, the following are true for some $\delta > 0$:
  \begin{equation*}
    \cC(\mX + \mY) = \cC(\mX) + \cC(\mY) \,, \quad \cC(\cC(\mX)) = \cC(\mX)\,, \quad \text{and} \quad  \expect[\cC(\mX)] = \delta X\,.
  \end{equation*}
\end{enumerate}
Consider a random-$p$ sampler whose $(i,j)$ element $[\cS_p(\mX)]_{i,j}$ is $X_{i,j}$ with probability $p$ and $0$ otherwise. Then $\cS_p(\cdot)$ is a linear projection operator satisfying \eqref{asm:compression} with $\delta = p$. 

For a second example closer to Algorithm~\ref{alg:power-gossip}, consider the following compressor for $\mX \in \real^{p,q}$:
\[
    \cR(\mX) \coloneqq (\mX \uu) \uu^\top \text{ for } \uu  \sim S^{(q-1)}\,,
\]
\ie we project $\mX$ along $\uu$ which is sampled uniformly from the unit sphere. The operator $\cR(\mX)$ approximates $\mX$ as a product of two rank-1 matrices $\uu$ and $\mX \uu$. Then, $\cR(\cdot)$ is clearly linear in $\mX$, is an unbiased projection operator, and satisfies \eqref{asm:compression} with $\delta = \frac{1}{q}$. We can also approximate $\mX$ by two rank-$k$ matrices as $\cR_k(\mX) = (\mX \mU) \mU^\top $ for $\mU \in \real^{q \times k}$ being a uniformly sampled orthonormal matrix. Then $\cR_k(\cdot)$ satisfies \eqref{asm:compression} with $\delta = \frac{k}{q}$.
We can also define a left projection operator $\cL(\mX) \coloneqq \vv (\vv^\top \mX) \text{ for } \vv  \sim S^{(p-1)}$. The operator $\cL(\cdot)$ approximates $\mX$ with two rank-1 matrices $\vv$ and $\mX^\top \vv$ and satisfies \eqref{asm:compression} with $\delta = \frac{1}{p}$.

While \eqref{asm:compression} defines a specific class of compression operators which are a subset of those considered in \citep{koloskova2019choco}, they can still be of arbitrary approximation quality $\delta >0$.

\subsection{Convergence rates}

We study the rate of consensus as well as convergence of the objective function in stochastic optimization with compressed communication. Our analysis shows that our algorithm is not only simpler than the previous approaches, but also significantly faster.
To simplify notation, we will use $\bar{\cdot}$ to indicate the average across the $\nWorkers$ nodes, \eg $\bar \mX \coloneqq \frac{1}{\nWorkers} \sum_{i=1}^\nWorkers \prm{i}$.

\paragraph{Compressed consensus.} Suppose that every iteration, each worker $i$ performs the following update:
\begin{equation}\label{eq:compressed-consensus}
  \prm[t]{i} \coloneqq \prm[t-1]{i} + \sum_{j \in \cN_i} W_{ij} \rbr*{\cC_{ij}^{(t)}(\prm[t-1]{j}) - \cC_{ij}^{(t)}(\prm[t-1]{i})}\,.
\end{equation}
Each edge $(i,j)$ can use a different compressor $\cC_{ij}^{(t)}$ that can be varied over time.
In this update, only compressed parameters are communicated.
\begin{theorem}\label{thm:compressed-consensus}
Assuming all compressors $\cC_{ij}^{(t)}$ are $\delta$-approximate satisfying \eqref{asm:compression} and that the mixing matrix $\mW$ has spectral gap $\rho$ as in \eqref{asm:mixing}, then the update \eqref{eq:compressed-consensus} achieves consensus at a $q$-linear rate:
\[
  \frac{1}{N}\sum_{i=1}^N \expect \big\|\prm[t]{i} - \bar \mX^{(0)}\big\|^2_F \leq (1 - \rho \delta) \frac{1}{N}\sum_{i=1}^N \big\|\prm[t-1]{i} - \bar \mX^{(0)}\big\|^2_F\,.
  \]
\end{theorem}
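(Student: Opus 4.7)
My plan is to reduce the claim to a one-step contraction for the centered iterates $\mZ_i^{(t)} \coloneqq \prm[t]{i}-\bar\mX^{(0)}$, and then bound $\expect \sum_i \|\mZ_i^{(t)}\|_F^2$ via a mean-plus-noise decomposition, controlling the mean-field part with the spectral gap of $\mW$ and the noise term with the orthogonal-projection structure of $\cC$.

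First, symmetry of $\mW$ together with the implicit symmetry $\cC_{ij}^{(t)}=\cC_{ji}^{(t)}$ of the per-edge compressors (from $\vv_{ij}=-\vv_{ji}$ in Algorithm~\ref{alg:power-gossip}, under which $(\mX\vv)\vv^\top$ is invariant) makes the $(i,j)$ contribution to $\prm{i}$ and the $(j,i)$ contribution to $\prm{j}$ cancel when summed over nodes, so $\bar\mX^{(t)}=\bar\mX^{(0)}$ for all $t$. By linearity of $\cC$, the centered iterates satisfy the same compressed-gossip recursion and sum to zero; stacking them into $\widetilde\mZ^{(t)}\in\R^{pq\times N}$, the rows of $\widetilde\mZ^{(t)}$ lie in $\mathbf{1}^\perp$. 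Unbiasedness $\expect[\cC_{ij}^{(t)}(\mX)]=\delta \mX$ in \eqref{asm:compression} then makes the conditional mean of the update a standard gossip step with the modified matrix $\mW_\delta \coloneqq (1-\delta)\mI+\delta\mW$, so writing $\mZ_i^{(t)}=(\mW_\delta \mZ^{(t-1)})_i+\mN_i$ with $\expect\mN_i=0$ and noting that the cross term vanishes,
\[
\expect \sum_i \|\mZ_i^{(t)}\|_F^2 = \|\widetilde\mZ^{(t-1)}\mW_\delta\|_F^2 + \expect \sum_i \|\mN_i\|_F^2.
\]

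For the variance term, idempotence plus unbiasedness in \eqref{asm:compression} give $\expect \|\cC_{ij}^{(t)}(\mX)\|_F^2=\langle \expect \cC_{ij}^{(t)}(\mX),\mX\rangle=\delta\|\mX\|_F^2$, and combined with independence of the compressors across distinct unordered edges the cross-edge products vanish in expectation, yielding $\expect \sum_i \|\mN_i\|_F^2 = \delta(1-\delta)\sum_{i,j} W_{ij}^2\,\|\mZ_j^{(t-1)}-\mZ_i^{(t-1)}\|_F^2$. For the mean term, expanding $\mW_\delta = \mI - \delta(\mI-\mW)$ and using the Laplacian identity $\Tr(\widetilde\mZ(\mI-\mW)\widetilde\mZ^\top) = \tfrac12 \sum_{i,j}W_{ij}\|\mZ_j-\mZ_i\|_F^2$,
\[
\|\widetilde\mZ \mW_\delta\|_F^2 = \|\widetilde\mZ\|_F^2 - \delta \sum_{i,j}W_{ij}\|\mZ_j-\mZ_i\|_F^2 + \delta^2 \|\widetilde\mZ(\mI-\mW)\|_F^2.
\]
Adding the two pieces, the total expected squared norm becomes
\[
\|\widetilde\mZ\|_F^2 - \delta\!\sum_{i,j}W_{ij}\bigl(1-(1-\delta)W_{ij}\bigr)\|\mZ_j-\mZ_i\|_F^2 + \delta^2\|\widetilde\mZ(\mI-\mW)\|_F^2.
\]

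Reaching the target bound $(1-\rho\delta)\|\widetilde\mZ\|_F^2$ then reduces to showing the spectral inequality
\[
\sum_{i,j} W_{ij}\bigl(1-(1-\delta)W_{ij}\bigr)\|\mZ_j-\mZ_i\|_F^2 - \delta\|\widetilde\mZ(\mI-\mW)\|_F^2 \;\ge\; \rho\,\|\widetilde\mZ\|_F^2
\]
on $\mathbf{1}^\perp$. This is the main obstacle and the technical heart of the proof, since a naive addition of the two pieces overshoots by exactly the variance and one must exploit the slack in the mean-term bound to absorb it. My plan is to attack it by using the identity $(\mW^2)_{ii}=\sum_j W_{ij}^2$ (from symmetry of $\mW$) to rewrite the noise sum as a quadratic form involving $\mW$ and the Hadamard square $\mW\odot\mW$, then applying the spectral bound $\mW^2 \preceq (1-\rho)\mI$ on $\mathbf{1}^\perp$ from \eqref{asm:mixing} together with the hypothesis $\delta\le 1$ to verify the resulting positive-semidefinite inequality.
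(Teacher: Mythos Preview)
Your route diverges from the paper's in a key way, and the hardest step is left open.

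The paper does not do a mean--noise decomposition. Instead it proves a \emph{per-node} contraction (its Lemma on the ``effect of compression''): writing each $\cC_{ij}^{(t)}$ as a projection matrix $\Pi_{ijt}$ with $\expect[\Pi_{ijt}]=\expect[\Pi_{ijt}^\top\Pi_{ijt}]=\delta I$, the cross products $\expect[\Pi_{ijt}^\top\Pi_{ikt}]$ are controlled using only the projection property $\Pi\preceq I$ --- no independence across edges is invoked, and the paper explicitly notes this. That yields, for each $i$,
\[
\expect\big\|\mDelta_i^{(t)}\big\|_F^2 \;\le\; (1-\delta)\big\|\mDelta_i^{(t-1)}\big\|_F^2 + \delta\,\Big\|\sum_j W_{ij}\mDelta_j^{(t-1)}\Big\|_F^2 .
\]
Averaging over $i$ and applying the standard mixing bound $\frac{1}{N}\sum_i\|\sum_j W_{ij}\mDelta_j\|_F^2\le (1-\rho)\frac{1}{N}\sum_i\|\mDelta_i\|_F^2$ immediately gives the factor $(1-\delta)+\delta(1-\rho)=1-\rho\delta$. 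No Hadamard-square term ever appears, and nothing about $\mW\odot\mW$ is needed.

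Your approach has two genuine gaps. First, you insert an independence assumption on the compressors across distinct edges; this is not part of \eqref{asm:compression} or of the theorem, and the paper's argument dispenses with it. Second --- and this is the real issue --- the spectral inequality you reduce to involves $\sum_{i,j}W_{ij}^2\|\mZ_i-\mZ_j\|_F^2$, i.e.\ the Laplacian of the entrywise square $\mW\odot\mW$, which in general does \emph{not} diagonalize in the eigenbasis of $\mW$. Your stated plan, to ``use $(\mW^2)_{ii}=\sum_j W_{ij}^2$ and then apply $\mW^2\preceq(1-\rho)I$ on $\mathbf{1}^\perp$'', does not close this: that identity only pins down the diagonal of $\mW^2$, not the quadratic form in $\mD-\mW\odot\mW$ that you need to compare to $\mI-\mW$ and $(\mI-\mW)^2$. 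So your proposal is incomplete exactly at the step you flag as the obstacle, whereas the paper's per-node bound via the projection structure sidesteps that obstacle entirely by collapsing all the $\Pi_{ij}^\top\Pi_{ik}$ cross terms into a single $\|\sum_j W_{ij}\mDelta_j\|_F^2$ before any spectral argument is applied.
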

Note that update \eqref{eq:compressed-consensus} requires no additional parameters and that our rate is linear in both $\delta$ and $\rho$. When $\delta=1$, \ie with uncompressed messages, the rate in \ref{thm:compressed-consensus} corresponds to the classical consensus rate \citep[\eg][]{XiaoB04}.
In contrast, \citep{koloskova2019choco} require a consensus stepsize, do not obtain $q$-linear rates, and are slower with a rate depending on $\rho^2 \delta$ instead of our $\rho\delta$.

\paragraph{Compressed optimization.} Consider the following algorithm
  where every node $i$ performs the following updates using a sequence of predetermined stepsizes $\{\eta_t\}$:
\begin{equation}\label{eq:compressed-sgd}
  \begin{split}
    \mY_i^{(t)} &\coloneqq \prm[t-1]{i} - \eta_t \nabla F_i(\mX, \xiv_{i,t}) \\
    \prm[t]{i} &\coloneqq \mY_i^{(t)} + \sum_{j \in \cN_i} W_{ij} \rbr{\cC_{ij}^{(t)}(\mY_j^{(t))}) - \cC_{ij}^{(t)}(\mY_i^{(t))})}\,.
  \end{split}
\end{equation}
This algorithm is like PowerGossip, but it applies the consensus update of \eqref{eq:compressed-consensus} after a local gradient update rather than simultaneously.
Again, the compressors are allowed to vary across edges and with time, and only compressed parameters are communicated. After running for $T$ steps, we will randomly pick the final model given some weights $\{\alpha_t\}$ as
\begin{equation}\label{eq:output-sgd}
  \mX_{i}^{\text{out}} \coloneqq \prm[t]{i} \text{ with probability proportional to } \alpha_t.
\end{equation}

\begin{theorem}\label{thm:compressed-sgd}
  Suppose that assumptions \ref{asm:smoothness}--\ref{asm:compression} hold at every round of \eqref{eq:compressed-sgd}. Then, in each of the following cases there exist a sequence of stepsizes $\{\eta_t\}$ and weights $\{\alpha_t\}$ such that the output $\bar \mX^{\text{out}}$ computed using \eqref{eq:compressed-sgd} and \eqref{eq:output-sgd} is $\e$-accurate.
  \begin{itemize}[leftmargin=2\baselineskip]
    \item \textbf{Non-convex:} $\expect \norm{\nabla f(\bar \mX^{\text{out}})}^2 \leq \e$ after  \vspace{-1mm}
      \[
        T = \cO\bigg( \frac{L \sigma^2}{\nWorkers \e^2} + \frac{\sqrt{L}(\zeta + \sigma)}{\rho \delta \e^{3/2}} + \frac{L}{\rho \delta \e}\bigg) \text { rounds.}
      \]
    \item \textbf{Convex:} If $\{f_i\}$ are convex and satisfy \eqref{asm:strong-convexity} with $\mu=0$, then $\expect \sbr{f(\bar \mX^{\text{out}})} \leq \e$ after  \vspace{-1mm}
      \[
        T = \cO\rbr*{ \frac{\sigma^2}{\nWorkers \e^2} + \frac{\zeta + \sigma}{\rho \delta \e^{3/2}} + \frac{L}{\rho \delta \e}} \text { rounds.}
      \]
      \item \textbf{Strongly-convex:} If $\{f_i\}$ satisfy \eqref{asm:strong-convexity} with $\mu>0$, then $\expect \sbr{f(\bar \mX^{\text{out}})} \leq \e$ after  \vspace{-1mm}
      \[
        T = \tilde\cO\rbr*{ \frac{\sigma^2}{\nWorkers \mu \e} + \frac{\zeta + \sigma}{\rho \delta \mu \sqrt{\e}} + \frac{L}{\rho \delta \mu}\log\rbr[\Big]{\frac{1}{\e}}} \text { rounds.}
      \]
  \end{itemize}
\end{theorem}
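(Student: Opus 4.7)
The plan is to reduce the analysis to a standard unified decentralized SGD template, using Theorem~\ref{thm:compressed-consensus} as a black box for controlling the consensus error with an effective spectral gap $\rho\delta$.

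First, I would verify that the averaged iterate evolves as centralized SGD with perturbed gradients. Using the symmetry of $\mW$ and the fact that $\cC_{ij}^{(t)}=\cC_{ji}^{(t)}$ is shared between neighbours, summing \eqref{eq:compressed-sgd} over $i$ and relabelling indices cancels the consensus correction exactly, giving
\begin{equation*}
\bar\mX^{(t)} \;=\; \bar\mX^{(t-1)} - \eta_t \tfrac{1}{n}\textstyle\sum_{i=1}^{n}\nabla F_i(\prm[t-1]{i},\xiv_{i,t}).
\end{equation*}
This identity is crucial: it turns the averaged trajectory into inexact SGD on $f$, where the inexactness comes entirely from the consensus error $\Xi_t \coloneqq \tfrac{1}{n}\sum_i \expect\norm{\prm[t]{i} - \bar\mX^{(t)}}_F^2$.

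Next I would establish a one-step recursion for $\Xi_t$. The inner consensus step in \eqref{eq:compressed-sgd} is precisely one round of \eqref{eq:compressed-consensus} applied to $\mY^{(t)}$, and by linearity of $\cC^{(t)}_{ij}$ the mean is preserved, so $\bar\mX^{(t)}=\bar\mY^{(t)}$. Applying Theorem~\ref{thm:compressed-consensus} to $\{\mY_i^{(t)}\}$ yields
\begin{equation*}
\Xi_t \;\leq\; (1-\rho\delta)\,\tfrac{1}{n}\textstyle\sum_{i=1}^{n}\expect\norm{\mY_i^{(t)}-\bar\mY^{(t)}}_F^2.
\end{equation*}
Expanding $\mY_i^{(t)}-\bar\mY^{(t)} = (\prm[t-1]{i}-\bar\mX^{(t-1)}) - \eta_t(\nabla F_i(\prm[t-1]{i},\xiv_{i,t}) - \tfrac{1}{n}\sum_j \nabla F_j(\prm[t-1]{j},\xiv_{j,t}))$, applying Young's inequality with parameter $\rho\delta/2$, and using \ref{asm:smoothness}, \ref{asm:heterogeneity} to split stochastic, heterogeneity, and gradient-at-average contributions, one obtains a recursion of the form
\begin{equation*}
\Xi_t \;\leq\; \bigl(1-\tfrac{\rho\delta}{2}\bigr)\Xi_{t-1} + \tfrac{c\,\eta_t^2}{\rho\delta}\bigl(\sigma^2 + \zeta^2 + \tfrac{1}{n}\textstyle\sum_i \expect\norm{\nabla f_i(\bar\mX^{(t-1)})}_F^2\bigr).
\end{equation*}

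Then I would prove the standard descent lemma for $\bar\mX^{(t)}$ from the SGD-like identity above: for $L$-smooth $f$, the consensus error contributes an $L^2\,\Xi_{t-1}$ deviation term, the stochastic noise contributes $\eta_t^2 L\sigma^2/n$, and the main descent term is $\tfrac{\eta_t}{2}\expect\norm{\nabla f(\bar\mX^{(t-1)})}_F^2$ (non-convex) or $\eta_t(\expect f(\bar\mX^{(t-1)})-f^\star)$ (under \ref{asm:strong-convexity}). At this point the problem is reduced to the canonical two-variable recursion solved by the unified decentralized-SGD framework of \cite{koloskova2019deep}; I would invoke that black-box lemma with effective mixing parameter $p\coloneqq \rho\delta$, which yields the three stated rates upon tuning $\{\eta_t\}$ and picking the iterate with probability proportional to $\alpha_t$ (constant $\alpha_t$ for non-convex/convex, geometrically increasing $\alpha_t$ for strongly convex).

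The main obstacle is the tight dependence on $\rho\delta$ (rather than $\rho^2\delta^2$ or $\rho\delta^2$ that would arise from a naive analysis). This is where Theorem~\ref{thm:compressed-consensus} pays off: because that theorem already gives a $q$-linear rate with factor $(1-\rho\delta)$ rather than $(1-\rho^2\delta)$, substituting it into the unified framework immediately gives the advertised asymptotics without any additional consensus stepsize. The only delicate calculation is verifying that the cross terms coming from heterogeneity $\zeta^2$ enter only the $1/(\rho\delta)\e^{3/2}$ term and not the leading $1/(n\e^2)$ term, which is achieved by the standard decomposition $\nabla F_i(\prm[t-1]{i},\xiv) - \nabla f(\bar\mX^{(t-1)}) = (\nabla F_i - \nabla f_i) + (\nabla f_i(\prm[t-1]{i}) - \nabla f_i(\bar\mX^{(t-1)})) + (\nabla f_i(\bar\mX^{(t-1)}) - \nabla f(\bar\mX^{(t-1)}))$ applied coordinate-wise inside the consensus recursion.
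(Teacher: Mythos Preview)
Your proposal is correct and follows essentially the same approach as the paper: establish that the consensus step preserves the average and contracts with rate $(1-\rho\delta)$ via Theorem~\ref{thm:compressed-consensus}, then plug into the unified decentralized-SGD framework with effective mixing parameter $p=\rho\delta$. The only minor discrepancy is that the black-box you want is \cite{koloskova2020unified} (the unified-analysis paper whose Assumption~4 is precisely the average-preserving contraction you verified), not \cite{koloskova2019deep}; the paper's own proof is in fact even terser than yours, simply citing that reduction directly rather than writing out the descent lemma and $\Xi_t$ recursion.
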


Let us focus on the strongly convex case ignoring logarithmic factors. Theorem \ref{thm:compressed-sgd} proves that the iteration complexity is $\frac{\sigma^2}{\nWorkers \mu \e} + \frac{\zeta + \sigma}{\rho \delta \mu \sqrt{\e}} + \frac{L}{\rho \delta \mu}\log\rbr[\big]{\frac{1}{\e}}$. This can be decomposed into three terms.
  The first stochastic term $\frac{\sigma^2}{\nWorkers \mu \e}$ is independent of both the compression factor $\delta$ as well as spectral-gap~$\rho$ implying that these terms do not affect the asymptotic rates. It scales linearly with the number of nodes $\nWorkers$.
  The second term $\frac{\zeta + \sigma}{\rho \delta \mu \sqrt{\e}}$ corresponds to the \emph{drift} experienced and is a penalty due to computation of gradients at inexact points \citep{karimireddy2019scaffold}. However, this is asymptotically smaller than the stochastic term.
  Last is the optimization term $\frac{L}{\rho \delta \mu}\log\rbr[\big]{\frac{1}{\e}}$, which is the slowed down by a factor of $\rho\delta$. If $\rho\delta=1$, this term matches the linear rate of gradient descent on strongly convex functions \citep{Nesterov04}. In contrast, the optimization term of \citep{koloskova2019choco} is sub-linear.
The dependence on $\rho$ and $\delta$ is linear in our rates while \citep{koloskova2019choco} have a quadratic dependence on $\rho$. 
With exact communication ($\delta=1$) we recover the rates of \citep{koloskova2020unified}.


\section{Experimental analysis} \label{sec:experimental}

We study PowerGossip in three settings.
We first evaluate bits of communication required to reach \textbf{consensus} between 8 workers in a ring through (compressed) gossip averaging.
The workers start with personal data matrices $\prm{i}$ ($i=1\ldots 8$) that are either 
\emph{unstructured}, from a $100\times 100$ standard normal distribution, or 
\emph{structured}, with $64\times 64$ images from the Faces Database~\citep{olivettifaces}.
Then we evaluate PowerGossip in deep learning. 
We study the algorithm on the Cifar-10 \textbf{image classification} benchmark of \cite{koloskova2019deep}, using a ResNet-20 and labeled images that are reshuffled between 8 workers every epoch.
We also follow the \textbf{language modeling} experiment on WikiText-2 with an LSTM from \cite{vogels2019powersgd} and extend it to a decentralized setting with 16 workers in a ring.
Here, the training data is strictly partitioned between workers, dividing the source text equally over the workers in the original ordering.

In all experiments, we tune the hyperparameters of our baselines according to Appendix~\ref{apx:hyperparameters} and use the same learning rate as uncompressed centralized SGD for all instances of PowerGossip. Further details on the experimental settings are specified in Appendix~\ref{apx:experimental-settings}.

\begin{figure}
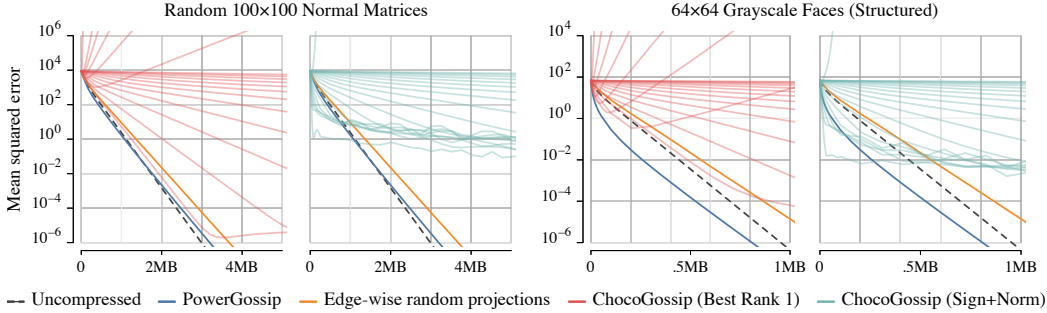

  \includegraphicscrop{figures/consensus/consensus}\\
  \vspace{-5mm}
  \caption{
    Consensus in an 8-ring.
    We study the level of consensus achieved as a function of bits transmitted by decentralized averaging.
    We compare out-of-the-box PowerGossip with power iterations and random projections against ChocoGossip~\citep{koloskova2019choco} with varying diffusion parameters.
    PowerGossip is competitive to the best tuned instances of ChocoGossip, and can leverage low rank structure in structured data (right).
  }
  \vspace{-3mm}
  \label{fig:consensus}
  \end{figure}
  
\paragraph{Random projections v.s.\ power iteration.}
Power iteration helps PowerGossip to leverage approximate low-rank structure in parameter differences between workers.
This is illustrated by the consensus experiments in Figure~\ref{fig:consensus}.
While on random data no compressed gossip algorithm outperforms full-precision gossip in bits to an arbitrary level of consensus,
PowerGossip can reliably use structure in images of faces~\cite{olivettifaces} with less communication.

\begin{wraptable}[5]{r}[0pt]{0.49\linewidth}%
  \begin{minipage}{\linewidth}%
  \vspace{-.9\baselineskip}%
  \tablefontsize
\begin{tabularx}{\textwidth}{l X l}
    \toprule
    Algorithm & & Test loss \\
    \cmidrule(lr){1-2} \cmidrule(lr){3-3}
    PowerGossip
    & w/ Random projections 
    & 4.627 
\tikz{
    \draw[white,line width=5pt] (0,0) -- (1.2,0);
    \draw[gray,line width=.3pt,<-] (0,0) -- (1.2,0);
    \draw[line width=.6pt] (0.9524039528586576, 0) -- (0.9524039528586576, 0);
    \draw[line width=.6pt] (0.9524039528586576, -2pt) -- (0.9524039528586576, 2pt);
}  \\
    & w/ Power iteration 
    & 4.565 
\tikz{
    \draw[white,line width=5pt] (0,0) -- (1.2,0);
    \draw[gray,line width=.3pt,<-] (0,0) -- (1.2,0);
    \draw[line width=.6pt] (0.27686753706498846, 0) -- (0.27686753706498846, 0);
    \draw[line width=.6pt] (0.27686753706498846, -2pt) -- (0.27686753706498846, 2pt);
}  \\ 
    \cmidrule(lr){1-2}
    \dpsgd 
    & \textcolor{lightgray}{35$\times$ communication} 
    & 4.583 
\tikz{
    \draw[white,line width=5pt] (0,0) -- (1.2,0);
    \draw[gray,line width=.3pt,<-] (0,0) -- (1.2,0);
    \draw[line width=.6pt] (0.46421772349964696, 0) -- (0.46421772349964696, 0);
    \draw[line width=.6pt] (0.46421772349964696, -2pt) -- (0.46421772349964696, 2pt);
}  \\ 
    \bottomrule
\end{tabularx}
  \end{minipage}
\end{wraptable}
In our deep learning experiments, we also observe that PowerGossip requires less communication than random projections. 
The table on the right shows that more efficient communication leads to improved test accuracy within a fixed budget of 90 epochs.

\paragraph{Compression rate.}
The compression rate in PowerGossip is determined by the number of power iteration steps per stochastic gradient update.
For models with large, square parameter tensors, like our LSTM (Appendix~\ref{apx:architectures}), a single step of PowerGossip uses less than $0.1\%$ of the bits used by an uncompressed averaging step. For a smaller model like the ResNet-20, the compression ratio is much lower.
While our algorithm works for any compression rate, more gradient steps may be required to reach the same accuracy under extreme compression. 

In our experiments, we use compression levels similar to those studied in related work.
At those levels, PowerGossip achieves test performance similar to uncompressed \dpsgd in the same number of steps.
Our compression level is varied through the number of power iterations per gradient update.
More power iteration steps speed up consensus at the cost of increased communication in the same way as increasing the rank of the compressor does (see Appendix~\ref{apx:rank-vs-steps}), but it requires less memory to store the previous approximation and avoids an expensive orthogonalization step~\citep{vogels2019powersgd}.
Table~\ref{tab:wikitext2-results} shows the effect of varying our compression rate while keeping the number of epochs fixed.

\begin{table}
  \tablefontsize
\begin{tabularx}{\textwidth}{X l l l r l}
    \toprule
    \multicolumn{1}{l}{Algorithm}
    & \multicolumn{1}{l}{$\eta$}
    & \multicolumn{1}{l}{$\gamma$}
    & \multicolumn{1}{l}{Test loss}
    & \multicolumn{2}{l}{Sent/epoch} \\
    \cmidrule(lr){1-1} \cmidrule(lr){2-3} \cmidrule(lr){4-4} \cmidrule(lr){5-6}
    All-reduce (baseline) 
    & tuned
    & \textcolor{lightgray}{}
    & 4.46 
    \tikz{
        \draw[white,line width=5pt] (0,0) -- (1.2,0);
        \draw[gray,line width=.3pt] (0,0) -- (1.2,0);
        \draw[line width=.6pt] (0.029317436843621883, 0) -- (0.04341637032930896, 0);
        \draw[line width=.6pt] (0.029317436843621883, -2pt) -- (0.029317436843621883, 2pt);
        \draw[line width=.6pt] (0.04341637032930896, -2pt) -- (0.04341637032930896, 2pt);
    }
    &  
    &    \\
    Uncompressed (\dpsgd) 
    & tuned
    & \textcolor{lightgray}{}
    & 4.58 
    \tikz{
        \draw[white,line width=5pt] (0,0) -- (1.2,0);
        \draw[gray,line width=.3pt] (0,0) -- (1.2,0);
        \draw[line width=.6pt] (0.27424576556096186, 0) -- (0.2820802782402652, 0);
        \draw[line width=.6pt] (0.2820802782402652, -2pt) -- (0.2820802782402652, 2pt);
        \draw[line width=.6pt] (0.27424576556096186, -2pt) -- (0.27424576556096186, 2pt);
    }
    & 15.0 GB 
    &    \\
    \cmidrule(lr){1-1} \cmidrule(lr){2-3} \cmidrule(lr){4-4} \cmidrule(lr){5-6}
    PowerGossip (8 iterations) 
    & \textcolor{lightgray}{default}
    & \textcolor{lightgray}{}
    & 4.73 
    \tikz{
        \draw[white,line width=5pt] (0,0) -- (1.2,0);
        \draw[gray,line width=.3pt] (0,0) -- (1.2,0);
        \draw[line width=.6pt] (0.5674111288101951, 0) -- (0.5753942270747942, 0);
        \draw[line width=.6pt] (0.5753942270747942, -2pt) -- (0.5753942270747942, 2pt);
        \draw[line width=.6pt] (0.5674111288101951, -2pt) -- (0.5674111288101951, 2pt);
    }
    & 127 MB 
    &        \textcolor{lightgray}{($122\times$)}    \\
    PowerGossip (16 iterations) 
    & \textcolor{lightgray}{default}
    & \textcolor{lightgray}{}
    & 4.63 
    \tikz{
        \draw[white,line width=5pt] (0,0) -- (1.2,0);
        \draw[gray,line width=.3pt] (0,0) -- (1.2,0);
        \draw[line width=.6pt] (0.3699519548259793, 0) -- (0.37088023951796156, 0);
        \draw[line width=.6pt] (0.37088023951796156, -2pt) -- (0.37088023951796156, 2pt);
        \draw[line width=.6pt] (0.3699519548259793, -2pt) -- (0.3699519548259793, 2pt);
    }
    & 230 MB 
    &        \textcolor{lightgray}{($67\times$)}    \\
    PowerGossip (32 iterations) 
    & \textcolor{lightgray}{default}
    & \textcolor{lightgray}{}
    & 4.57 
    \tikz{
        \draw[white,line width=5pt] (0,0) -- (1.2,0);
        \draw[gray,line width=.3pt] (0,0) -- (1.2,0);
        \draw[line width=.6pt] (0.25744977857245793, 0) -- (0.26810516607565904, 0);
        \draw[line width=.6pt] (0.26810516607565904, -2pt) -- (0.26810516607565904, 2pt);
        \draw[line width=.6pt] (0.25744977857245793, -2pt) -- (0.25744977857245793, 2pt);
    }
    & 437 MB 
    &        \textcolor{lightgray}{($35\times$)}    \\
    \cmidrule(lr){1-1} \cmidrule(lr){2-3} \cmidrule(lr){4-4} \cmidrule(lr){5-6}
    Choco (Sign+Norm) 
    & tuned
    & tuned
    & 4.49 
    \tikz{
        \draw[white,line width=5pt] (0,0) -- (1.2,0);
        \draw[gray,line width=.3pt] (0,0) -- (1.2,0);
        \draw[line width=.6pt] (0.09598189807328918, 0) -- (0.10265961944079759, 0);
        \draw[line width=.6pt] (0.09598189807328918, -2pt) -- (0.09598189807328918, 2pt);
        \draw[line width=.6pt] (0.10265961944079759, -2pt) -- (0.10265961944079759, 2pt);
    }
    & 483 MB 
    &        \textcolor{lightgray}{($32\times$)}    \\
    Choco (top-1\%) 
    & tuned
    & tuned
    & 5.04 
    \tikz{
        \draw[white,line width=5pt] (0,0) -- (1.2,0);
        \draw[gray,line width=.3pt] (0,0) -- (1.2,0);
        \draw[line width=.6pt] (1.1733709116451077, 0) -- (1.1733709116451077, 0);
        \draw[line width=.6pt] (1.1733709116451077, -2pt) -- (1.1733709116451077, 2pt);
    }
    & 464 MB 
    &        \textcolor{lightgray}{($33\times$)}    \\
    \bottomrule
\end{tabularx}
  \caption{
    Test loss achieved within 90 epochs on WikiText-2 language modeling with an LSTM on a 16-ring with strictly partitioned training data.
    PowerGossip requires no tuning, supports varying levels of compression, and is competitive to tuned ChocoSGD~\citep{koloskova2019deep} at a similar compression rate, matching the test loss of uncompressed \dpsgd.
  }
  \vspace{-3mm}
  \label{tab:wikitext2-results}
\end{table}

\paragraph{Hyper-parameter tuning.}
In our experiments, we have strictly used the same learning rate tuned for centralized, uncompressed SGD for all PowerGossip configurations.
Tables~\ref{tab:wikitext2-results}~and~\ref{tab:cifar10-results} show that we can reach performance competitive to \dpsgd in both tasks, at a similar compression rate to the best tuned configurations of ChocoSGD~\citep{koloskova2019choco} and DeepSqueeze~\citep{tang2019deepsqueeze}.

\begin{table}
\tablefontsize
\begin{tabularx}{\textwidth}{X l l l l r@{\hskip .5em}l}
    \toprule
    Algorithm
    & \multicolumn{1}{l}{$\eta$}
    & \multicolumn{1}{l}{$\gamma$}
    & \multicolumn{1}{l}{$\theta$}
    & \multicolumn{1}{l}{Test accuracy}
    & \multicolumn{2}{l}{Sent/epoch} \\
    \cmidrule(lr){1-1} \cmidrule(lr){2-4} \cmidrule(lr){3-3} \cmidrule(lr){4-4} \cmidrule(lr){5-5} \cmidrule(lr){6-7}
    All-reduce (baseline) 
    & tuned
    & \textcolor{lightgray}{}
    & \textcolor{lightgray}{}
    & 92.3\% 
    \tikz{
        \draw[white,line width=5pt] (0,0) -- (1.2,0);
        \draw[gray,line width=.3pt] (0,0) -- (1.2,0);
        \draw[line width=.6pt] (0.759997367858886, 0) -- (1.046395778656003, 0);
        \draw[line width=.6pt] (1.0, -2pt) -- (1.0, 2pt);
        \draw[line width=.6pt] (0.9967975616455051, -2pt) -- (0.9967975616455051, 2pt);
        \draw[line width=.6pt] (0.759997367858886, -2pt) -- (0.759997367858886, 2pt);
        \draw[line width=.6pt] (0.8295984268188442, -2pt) -- (0.8295984268188442, 2pt);
        \draw[line width=.6pt] (0.9079971313476545, -2pt) -- (0.9079971313476545, 2pt);
        \draw[line width=.6pt] (1.046395778656003, -2pt) -- (1.046395778656003, 2pt);
    }
    &  
    &
    \\
    Uncompressed (\dpsgd) 
    & tuned
    & \textcolor{lightgray}{}
    & \textcolor{lightgray}{}
    & 92.1\% 
    \tikz{
        \draw[white,line width=5pt] (0,0) -- (1.2,0);
        \draw[gray,line width=.3pt] (0,0) -- (1.2,0);
        \draw[line width=.6pt] (0.7432012557983374, 0) -- (0.8663997650146485, 0);
        \draw[line width=.6pt] (0.8287954330444328, -2pt) -- (0.8287954330444328, 2pt);
        \draw[line width=.6pt] (0.8663983345031704, -2pt) -- (0.8663983345031704, 2pt);
        \draw[line width=.6pt] (0.7432012557983374, -2pt) -- (0.7432012557983374, 2pt);
        \draw[line width=.6pt] (0.8528017997741691, -2pt) -- (0.8528017997741691, 2pt);
        \draw[line width=.6pt] (0.8663997650146485, -2pt) -- (0.8663997650146485, 2pt);
        \draw[line width=.6pt] (0.8431954383850063, -2pt) -- (0.8431954383850063, 2pt);
    }
    & 102 MB 
    &
    \\
    \cmidrule(lr){1-1}    \cmidrule(lr){2-4}    \cmidrule(lr){5-5}\cmidrule(lr){6-7}
    Choco (top-1\%) 
    & tuned
    & tuned
    & \textcolor{lightgray}{}
    & 91.2\% 
    \tikz{
        \draw[white,line width=5pt] (0,0) -- (1.2,0);
        \draw[gray,line width=.3pt] (0,0) -- (1.2,0);
        \draw[line width=.6pt] (0.39599704742431424, 0) -- (0.5815978050231901, 0);
        \draw[line width=.6pt] (0.4359974861145007, -2pt) -- (0.4359974861145007, 2pt);
        \draw[line width=.6pt] (0.438397884368897, -2pt) -- (0.438397884368897, 2pt);
        \draw[line width=.6pt] (0.5551991462707524, -2pt) -- (0.5551991462707524, 2pt);
        \draw[line width=.6pt] (0.5815978050231901, -2pt) -- (0.5815978050231901, 2pt);
        \draw[line width=.6pt] (0.4687991142272936, -2pt) -- (0.4687991142272936, 2pt);
        \draw[line width=.6pt] (0.39599704742431424, -2pt) -- (0.39599704742431424, 2pt);
    }
    & 3.1 MB 
    &
        \textcolor{lightgray}{($33\times$)}
    \\
    Choco (Sign+Norm) 
    & tuned
    & tuned
    & \textcolor{lightgray}{}
    & 92.0\% 
    \tikz{
        \draw[white,line width=5pt] (0,0) -- (1.2,0);
        \draw[gray,line width=.3pt] (0,0) -- (1.2,0);
        \draw[line width=.6pt] (0.7136020660400393, 0) -- (0.9055976867675755, 0);
        \draw[line width=.6pt] (0.7631978988647463, -2pt) -- (0.7631978988647463, 2pt);
        \draw[line width=.6pt] (0.9055976867675755, -2pt) -- (0.9055976867675755, 2pt);
        \draw[line width=.6pt] (0.8031988143920891, -2pt) -- (0.8031988143920891, 2pt);
        \draw[line width=.6pt] (0.8095993995666487, -2pt) -- (0.8095993995666487, 2pt);
        \draw[line width=.6pt] (0.7136020660400393, -2pt) -- (0.7136020660400393, 2pt);
        \draw[line width=.6pt] (0.7583999633789056, -2pt) -- (0.7583999633789056, 2pt);
    }
    & 3.2 MB 
    &
        \textcolor{lightgray}{($32\times$)}
    \\
    Moniqua (2-bit) 
    & tuned
    & tuned
    & tuned
    & 90.7\% 
    \tikz{
        \draw[white,line width=5pt] (0,0) -- (1.2,0);
        \draw[gray,line width=.3pt] (0,0) -- (1.2,0);
        \draw[line width=.6pt] (0.15840148925781325, 0) -- (0.3351969718933076, 0);
        \draw[line width=.6pt] (0.15840148925781325, -2pt) -- (0.15840148925781325, 2pt);
        \draw[line width=.6pt] (0.3351969718933076, -2pt) -- (0.3351969718933076, 2pt);
        \draw[line width=.6pt] (0.31759738922119113, -2pt) -- (0.31759738922119113, 2pt);
        \draw[line width=.6pt] (0.2103986740112294, -2pt) -- (0.2103986740112294, 2pt);
        \draw[line width=.6pt] (0.3167996406555182, -2pt) -- (0.3167996406555182, 2pt);
        \draw[line width=.6pt] (0.24720191955566384, -2pt) -- (0.24720191955566384, 2pt);
    }
    & 6.4 MB 
    &
        \textcolor{lightgray}{($16\times$)}
    \\
    DeepSqueeze (Sign+Norm) 
    & tuned
    & tuned
    & \textcolor{lightgray}{}
    & 91.2\% 
    \tikz{
        \draw[white,line width=5pt] (0,0) -- (1.2,0);
        \draw[gray,line width=.3pt] (0,0) -- (1.2,0);
        \draw[line width=.6pt] (0.39039659500121854, 0) -- (0.5543985366821276, 0);
        \draw[line width=.6pt] (0.5103993415832515, -2pt) -- (0.5103993415832515, 2pt);
        \draw[line width=.6pt] (0.5543985366821276, -2pt) -- (0.5543985366821276, 2pt);
        \draw[line width=.6pt] (0.46159887313842646, -2pt) -- (0.46159887313842646, 2pt);
        \draw[line width=.6pt] (0.41919708251953086, -2pt) -- (0.41919708251953086, 2pt);
        \draw[line width=.6pt] (0.48479700088500843, -2pt) -- (0.48479700088500843, 2pt);
        \draw[line width=.6pt] (0.39039659500121854, -2pt) -- (0.39039659500121854, 2pt);
    }
    & 3.2 MB 
    &
        \textcolor{lightgray}{($32\times$)}
    \\
    \cmidrule(lr){1-1}    \cmidrule(lr){2-4}    \cmidrule(lr){5-5}\cmidrule(lr){6-7}
    PowerGossip (1 iteration) 
    & \textcolor{lightgray}{default}
    & \textcolor{lightgray}{}
    & \textcolor{lightgray}{}
    & 91.7\% 
    \tikz{
        \draw[white,line width=5pt] (0,0) -- (1.2,0);
        \draw[gray,line width=.3pt] (0,0) -- (1.2,0);
        \draw[line width=.6pt] (0.575196743011474, 0) -- (0.8239970207214357, 0);
        \draw[line width=.6pt] (0.7120003700256332, -2pt) -- (0.7120003700256332, 2pt);
        \draw[line width=.6pt] (0.6751995086669899, -2pt) -- (0.6751995086669899, 2pt);
        \draw[line width=.6pt] (0.575196743011474, -2pt) -- (0.575196743011474, 2pt);
        \draw[line width=.6pt] (0.583203315734861, -2pt) -- (0.583203315734861, 2pt);
        \draw[line width=.6pt] (0.6895961761474594, -2pt) -- (0.6895961761474594, 2pt);
        \draw[line width=.6pt] (0.8239970207214357, -2pt) -- (0.8239970207214357, 2pt);
    }
    & 1.8 MB 
    &
        \textcolor{lightgray}{($57\times$)}
    \\
    PowerGossip (2 iterations) 
    & \textcolor{lightgray}{default}
    & \textcolor{lightgray}{}
    & \textcolor{lightgray}{}
    & 91.9\% 
    \tikz{
        \draw[white,line width=5pt] (0,0) -- (1.2,0);
        \draw[gray,line width=.3pt] (0,0) -- (1.2,0);
        \draw[line width=.6pt] (0.6752033233642546, 0) -- (0.8855977058410628, 0);
        \draw[line width=.6pt] (0.8855977058410628, -2pt) -- (0.8855977058410628, 2pt);
        \draw[line width=.6pt] (0.6752033233642546, -2pt) -- (0.6752033233642546, 2pt);
        \draw[line width=.6pt] (0.8143987655639632, -2pt) -- (0.8143987655639632, 2pt);
        \draw[line width=.6pt] (0.8423991203308071, -2pt) -- (0.8423991203308071, 2pt);
        \draw[line width=.6pt] (0.7720041275024407, -2pt) -- (0.7720041275024407, 2pt);
        \draw[line width=.6pt] (0.6880006790161135, -2pt) -- (0.6880006790161135, 2pt);
    }
    & 3.0 MB 
    &
        \textcolor{lightgray}{($34\times$)}
    \\
    \bottomrule
\end{tabularx}
\caption{
  Test accuracy reached on Cifar-10 within 300 epochs with a ResNet-20 by decentralized optimization algorithms.
  PowerGossip has no additional hyperparameters and is competitive to all related work at a similar compression rate.
  Other algorithms used tuned learning rate $\eta$, averaging stepsize $\gamma$.
  Moniqua has an additional parameter $\theta$ that can be computed or tuned.}
  \vspace{-3mm}
\label{tab:cifar10-results}
\end{table}

\section{Conclusion} \label{sec:conclusion}

The introduction of communication compression to decentralized learning has come with algorithmic changes that introduced new hyperparameters required to support arbitrary compression operators.
Focusing on a special class of linear low-rank compression, we presented simple parameter-free algorithms that perform as well as the extensively tuned alternatives in decentralized learning.
Using power-iterations, this method can leverage the approximate low-rank structure present in deep learning updates to maximize the information transferred per bit, and reduce the communication between workers significantly at no loss in quality compared to full-precision decentralized algorithms.
This is achieved with lower memory consumption than current state-of-the-art decentralized optimization algorithms that use communication compression.

Plug-and-play algorithms like PowerGossip can be directly deployed in a decentralized setting while reusing standard learning rates set in the centralized environment without compression. 
In view of the environmental, financial, and productivity impact of hyperparameter tuning in deep learning, 
such tuning-free methods are crucial for practical applicability of communication compression in decentralized machine learning.

\vfill
\section{Broader Impact}

We believe that the field of decentralized learning plays a key role in translating the recent successes in deep learning from large organizations with large centralized datasets to smaller industry players and individuals. In particular, decentralized and therefore collaborative training on decentralized data is an important building block towards helping to better align each individual's data ownership and privacy with the resulting utility from jointly trained machine learning models.
The ability to train collaboratively on decentralized data may lead to transformative insights in many fields, especially in applications where data is user-provided and privacy sensitive~\citep{nedic2020review}. In addition to privacy, efficiency gains in distributed training reduce the environmental impact of training large machine learning models.
The introduction of a practical and reliable communication compression technique is a small step towards achieving these goals on collaborative privacy-preserving and efficient decentralized learning.

\section{Funding Disclosure}

This project was supported by SNSF grant 200021\_175796, as well as a Google Focused Research
Award. The experiments were run using Google Cloud credits donated by Google.

\bibliography{bibliography_autogen}

\appendix

\section{Compressed Consensus (Proof of Theorem~\ref{thm:compressed-consensus})}
    Recall that the consensus update for each node $i$ performs \eqref{eq:compressed-consensus}:
    \begin{equation*}
        \prm[t]{i} = \prm[t-1]{i} + \sum_{j \in \cN_i} W_{ij} \rbr{\cC_{ijt}(\prm[t-1]{j}) - \cC_{ijt}(\prm[t-1]{i})}\,.
    \end{equation*}
    \begin{lemma}[Preserves average] \label{lem:consesus-average}
        For every step of \eqref{eq:compressed-consensus}, $\bar \mX^{(t)} = \bar \mX^{(0)}$.
    \end{lemma}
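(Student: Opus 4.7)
The plan is to prove the claim by induction on $t$, where the inductive step reduces to showing that the total sum $\sum_i \prm[t]{i}$ is invariant under one application of update \eqref{eq:compressed-consensus}. The base case $t=0$ is trivial, so the content is entirely in one step: showing that
\[
\sum_{i=1}^{N}\sum_{j \in \cN_i} W_{ij}\bigl(\cC_{ijt}(\prm[t-1]{j}) - \cC_{ijt}(\prm[t-1]{i})\bigr) = 0.
\]

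First I would split the double sum into two halves, one involving $\cC_{ijt}(\prm[t-1]{j})$ and the other involving $\cC_{ijt}(\prm[t-1]{i})$. The strategy is to reindex the first half by swapping the roles of $i$ and $j$, turning it into a sum of the form $\sum_{j}\sum_{i \in \cN_j} W_{ji}\,\cC_{jit}(\prm[t-1]{i})$. Two structural ingredients then line up the two halves. The first is the symmetry of the mixing matrix from \ref{asm:mixing}, which gives $W_{ji}=W_{ij}$. The second is that the compressor on a given edge is the same regardless of which endpoint it is viewed from, i.e.\ $\cC_{ijt}=\cC_{jit}$; this is forced by the fact that the two endpoints must compute a consistent value to communicate at all (in Algorithm~\ref{alg:power-gossip}, the shared projection vectors $\vv_{ij}$ and $\vv_{ji}$ differ only by a sign, and the rank-$1$ operator $\compr_{\vv}(\mX) = (\mX\vv)\vv^\top$ is invariant under $\vv \mapsto -\vv$). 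Combining these two symmetries makes the reindexed first half coincide term-by-term with the second half, so their difference vanishes.

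With this cancellation in hand, averaging the update over $i$ gives $\bar \mX^{(t)} = \bar \mX^{(t-1)}$, and induction finishes the proof. The main (and really the only) obstacle is articulating cleanly that $\cC_{ijt}=\cC_{jit}$; this is not stated explicitly in assumption \ref{asm:compression} but is built into the algorithmic setup since the two neighbors need to agree on the compressed message. Linearity of $\cC_{ijt}$ from \ref{asm:compression} is not actually needed here; what matters is only the edgewise symmetry of $\mW$ and of the chosen compressor.
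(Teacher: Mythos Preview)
Your proposal is correct and matches the paper's own argument: both proofs rely on the edgewise symmetry $W_{ij}=W_{ji}$ from \ref{asm:mixing} together with $\cC_{ijt}=\cC_{jit}$ to show that the contributions from each edge cancel in the sum over all nodes. The paper phrases this as a sum over undirected edges $(i,j)\in E$ rather than via reindexing, but the content is identical.
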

    \begin{proof}
        Note that for every edge $(i,j) \in E$, we add to node $i$ exactly what is subtracted from node $j$. This preserves the average:
        \begin{align*}
            \bar \mX^{(t)} &= \frac{1}{\nWorkers}\sum_{i=1}^\nWorkers \rbr*{\prm[t-1]{i} + \sum_{j \in \cN_i} W_{ij} \rbr{\cC_{ijt}(\prm[t-1]{j}) - \cC_{ijt}(\prm[t-1]{i})}}\\
            &= \bar \mX^{(t-1)} + \frac{1}{\nWorkers}\sum_{(i,j) \in E} \rbr*{W_{ij} \rbr{\cC_{ijt}(\prm[t-1]{j}) - \cC_{ijt}(\prm[t-1]{i})} + W_{ji} \rbr{\cC_{jit}(\prm[t-1]{i}) - \cC_{jit}(\prm[t-1]{j})}}\\
            &= \bar \mX^{(t-1)}\,.
        \end{align*}
        The last equality follows because $W_{ij} = W_{ji}$ and $\cC_{ijt} = \cC_{jit}$.
    \end{proof}

    \begin{lemma}[Effect of compression]\label{lem:consesus-compression}
        Assuming \eqref{asm:mixing} and \eqref{asm:compression} hold, the iteration \eqref{eq:compressed-consensus} satisfies
         \[
             \norm{\mDelta_i^{(t)}}^2_F \leq (1-\delta)\norm{\mDelta_i^{(t-1)}}^2_F + \delta \norm{\sum_{j \in [N]}W_{ij}\mDelta_j^{(t-1)}}^2_F\,.
        \]
        where we define $\mDelta_i^{(t)} := \prm[t]{i} - \bar \mX^{(0)}$.
    \end{lemma}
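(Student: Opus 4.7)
The plan is to rewrite the update \eqref{eq:compressed-consensus} in terms of the consensus errors $\mDelta_i^{(t)} := \prm[t]{i} - \bar\mX^{(0)}$ and then exploit the fact that, by \ref{asm:compression}, each $\cC_{ijt}$ is an unbiased idempotent linear projection (orthogonal in the sense that the stated examples are self-adjoint). The key moment identity I will rely on is $\expect \norm{\cC(X)}_F^2 = \delta\,\norm{X}_F^2$, which follows from $\inp{X}{\cC(X)} = \inp{X}{\cC^2(X)} = \norm{\cC(X)}_F^2$ together with $\expect[\cC(X)] = \delta X$.

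First I would subtract $\bar\mX^{(0)}$ from both sides of \eqref{eq:compressed-consensus} and apply linearity of $\cC_{ijt}$. Writing $A := \mDelta_i^{(t-1)}$ and $B := \sum_{j} W_{ij}\mDelta_j^{(t-1)}$, and using $\sum_j W_{ij} = 1$ (so that $\sum_j W_{ij}(\mDelta_j^{(t-1)} - A) = B - A$), the update becomes
$$\mDelta_i^{(t)} = A + \sum_{j \in \cN_i} W_{ij}\,\cC_{ijt}\rbr{\mDelta_j^{(t-1)} - A}.$$

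The second step is an orthogonal-decomposition identity. Treating the aggregate edge compression as a single projection $\cC$ applied to $B - A$ yields $\mDelta_i^{(t)} = (\mathrm{id} - \cC)(A) + \cC(B)$, where $(\mathrm{id}-\cC)(A)$ and $\cC(B)$ lie in orthogonal complementary subspaces because $\cC \circ (\mathrm{id}-\cC) = \cC - \cC^2 = 0$. Hence $\norm{\mDelta_i^{(t)}}_F^2 = \norm{(\mathrm{id}-\cC)(A)}_F^2 + \norm{\cC(B)}_F^2$, and taking expectations together with $\expect\norm{\cC(X)}_F^2 = \delta\norm{X}_F^2$ and $\expect\norm{(\mathrm{id}-\cC)(X)}_F^2 = (1-\delta)\norm{X}_F^2$ delivers the claim, with equality in this idealized single-compressor case.

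The main obstacle is that in general the compressors $\cC_{ijt}$ are indexed per edge, so the clean single-$\cC$ factorization is unavailable. I would handle this by splitting $\mDelta_i^{(t)}$ into its mean $(1-\delta)A + \delta B$ plus the zero-mean noise $\sum_j W_{ij}(\cC_{ijt}(Z_j) - \delta Z_j)$ with $Z_j := \mDelta_j^{(t-1)} - A$, and expanding $\expect\norm{\mDelta_i^{(t)}}_F^2 = \norm{(1-\delta)A + \delta B}_F^2 + \mathrm{Var}[\mDelta_i^{(t)}]$. The mean-squared part equals $(1-\delta)\norm{A}_F^2 + \delta\norm{B}_F^2 - \delta(1-\delta)\norm{B-A}_F^2$ by the standard convexity identity, so the real work is controlling the per-edge variance terms so that they do not exceed $\delta(1-\delta)\norm{B-A}_F^2$; I expect this to be the delicate step, requiring the non-negativity and symmetry of $\mW$ together with the identity $\sum_j W_{ij} Z_j = B - A$.
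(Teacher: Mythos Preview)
Your single-compressor argument is clean and yields equality, but the gap is exactly where you flag it, and it is a real one rather than a bookkeeping nuisance. The variance inequality you would need,
\[
  \expect\norm[\Big]{\sum_{j} W_{ij}\bigl(\cC_{ijt}(Z_j) - \delta Z_j\bigr)}_F^2 \;\leq\; \delta(1-\delta)\,\norm{B-A}_F^2\,,
\]
is false in general. Take two neighbours with $W_{i1}=W_{i2}=\tfrac12$ and $Z_1=-Z_2=z\neq 0$, so that $B-A=\tfrac12 Z_1+\tfrac12 Z_2=0$; with independent edge compressors the left side equals $\tfrac12\delta(1-\delta)\norm{z}_F^2>0$. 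Neither the non-negativity nor the row-stochasticity of $\mW$ can save this, because the obstruction is that the $Z_j$ cancel in $\sum_j W_{ij}Z_j$ while their random compressions need not. So the mean--variance split is the wrong decomposition here.

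The paper does not split off the mean. It represents each $\cC_{ijt}$ by a random projection matrix $\Pi_{ijt}$ satisfying $\expect[\Pi_{ijt}]=\expect[\Pi_{ijt}^\top\Pi_{ijt}]=\delta I$, expands $\expect\bigl[{\mDelta_i^{(t)}}^\top\mDelta_i^{(t)}\bigr]$ directly, and handles the edge--edge cross-products $\expect[\Pi_{ijt}^\top\Pi_{ikt}]$ by invoking the operator bound $\Pi_{ijt}^\top\preceq I$ (valid for projections) to conclude $\expect[\Pi_{ijt}^\top\Pi_{ikt}]\preceq\expect[\Pi_{ikt}]=\delta I$. The ingredient your plan is missing is precisely this use of the \emph{contraction} property of orthogonal projections (operator norm at most one), as opposed to only their first and second moments; the combinatorics of $\mW$ alone cannot substitute for it.
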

    \begin{proof}
        Starting from the consensus update and the fact that $\sum_{j}W_{ij} = 1$, we have
        \begin{align*}
            \prm[t]{i} &= \prm[t-1]{i} + \sum_{j \in \cN_i} W_{ij} \rbr{\cC_{ijt}(\prm[t-1]{j}) - \cC_{ijt}(\prm[t-1]{i})}\\
            &= \prm[t-1]{i} + \sum_{j \in [N]} W_{ij} \rbr{\cC_{ijt}(\prm[t-1]{j} - \prm[t-1]{i})}\\
            &= \prm[t-1]{i} + \sum_{j \in [N]} W_{ij}{\Pi_{ijt}(\prm[t-1]{j} - \prm[t-1]{i})}
            \,.
        \end{align*}
        The second equality used $W_{ij} \neq 0$ only if $(i,j) \in E$ and the linearity of the compressor. 
        Finally, since $\cC_{ijt}$ is a linear projection, we can replace it by a projection matrix $\Pi_{ijt}$. 
        Recall that $\cC_{ijt}$ is an $\delta$-approximate linear projection which implies that $\Pi_{ijt}$ satisfies
        \begin{equation}\label{eq:projection-quality}
            \expect[\Pi_{ijt}] = \expect[\Pi_{ijt}^\top] = \expect[\Pi_{ijt}^\top \Pi_{ijt}]= \delta I \,.            
        \end{equation}
        Further, since $\Pi_{ijt}$ is a projection matrix, we have for any $i,j$
        \begin{align*}
            &\Pi_{ijt}^\top \mleq I\\
            \Rightarrow &\Pi_{ijt}^\top\Pi_{ikt} \mleq \Pi_{ikt} \\
            \Rightarrow & \expect[\Pi_{ijt}^\top\Pi_{ikt}] \mleq  \expect[\Pi_{ikt}] = \delta I\,.
        \end{align*}
        Note that we did not require any sort of independence between the projections $\Pi_{ijt}^\top\Pi_{ikt}$ in the above derivation. Armed with these properties of the projection matrices, we turn our attention to the error term defined as $\mDelta_i^{(t)} := \prm[t]{i} - \bar \mX^{(0)}$. Our previous expression for $\prm[t]{i}$ implies that
        \[
            \mDelta_i^{(t)} = \mDelta_i^{(t-1)} + \sum_{j \in [\nWorkers]} W_{ij}{\Pi_{ijt}(\mDelta_j^{(t-1)} - \mDelta_i^{(t-1)})}\,.
        \]
        Expanding ${\mDelta_i^{(t)}}^\top \mDelta_i^{(t)}$ and taking expectations on both sides gives
        \begin{align*}
            \expect[{\mDelta_i^{(t)}}^\top \mDelta_i^{(t)}] &= {\mDelta_i^{(t-1)}}^\top \mDelta_i^{(t-1)}  + \sum_{j \in [\nWorkers]} W_{ij}{\mDelta_i^{(t-1)}}^\top \expect[\Pi_{ijt}]{(\mDelta_j^{(t-1)} - \mDelta_i^{(t-1)})} 
                \\ &\hspace{15mm}+ \sum_{j \in [\nWorkers]} W_{ij}{(\mDelta_j^{(t-1)} - \mDelta_i^{(t-1)})}^\top \expect[\Pi_{ijt}^\top] \mDelta_i^{(t-1)}
                \\ &\hspace{15mm}+ \sum_{j,k \in [\nWorkers]} W_{ij}W_{ik} {(\mDelta_j^{(t-1)} - \mDelta_i^{(t-1)})}^\top \expect[\Pi_{ijt}^\top\Pi_{ikt}] (\mDelta_k^{(t-1)} - \mDelta_i^{(t-1)})\\
            &\mleq {\mDelta_i^{(t-1)}}^\top \mDelta_i^{(t-1)}  + \sum_{j \in [\nWorkers]} \delta W_{ij}{\mDelta_i^{(t-1)}}^\top{(\mDelta_j^{(t-1)} - \mDelta_i^{(t-1)})} 
                \\ &\hspace{15mm}+ \sum_{j \in [\nWorkers]} \delta W_{ij}{(\mDelta_j^{(t-1)} - \mDelta_i^{(t-1)})}^\top \mDelta_i^{(t-1)}
                \\ &\hspace{15mm}+ \sum_{j,k \in [\nWorkers]} \delta W_{ij}W_{ik} {(\mDelta_j^{(t-1)} - \mDelta_i^{(t-1)})}^\top (\mDelta_k^{(t-1)} - \mDelta_i^{(t-1)})\\
            &= {\mDelta_i^{(t-1)}}^\top \mDelta_i^{(t-1)} - \delta  \mDelta_i^{(t-1)} {\mDelta_i^{(t-1)}}^\top + \sum_{j,k \in [\nWorkers]}\delta W_{ij}W_{jk}{\mDelta_j^{(t-1)}}^\top \mDelta_k^{(t-1)}.
        \end{align*} 
        The second matrix inequality used the fact that if $A \mleq B$ then $C^\top A C \mleq C^\top B C$ for any $C$. The equality in the third step pulled out the terms which only depend on $i$ from the expressions and used our assumption \eqref{asm:mixing} that $\sum_{j} W_{ij} = \sum_{i} W_{ij} = 1$. Taking trace on both sides and using $\Tr(AB) = \Tr(BA)$ we can simplify the expression as
        \[
            \expect[\Tr({\mDelta_i^{(t)}}^\top \mDelta_i^{(t)})]  \leq (1 - \delta)\Tr({\mDelta_i^{(t-1)}}^\top \mDelta_i^{(t-1)}) + \delta \Tr\rbr{(\sum_{j\in[\nWorkers]}W_{ij}\mDelta_j)^\top(\sum_{j\in[\nWorkers]}W_{ij}\mDelta_j)}
        \]
        The lemma now follows by the definition of Frobenius norm $\norm{Z}_F^2 = \Tr(Z^\top Z)$.
    \end{proof}
    \begin{lemma}[Effect of mixing] \label{lem:consesus-mixing}
        Assuming that $\mW$ has a spectral gap $\rho$ as in \eqref{asm:mixing} and $\mDelta_i^{(t)} := \prm[t]{i} - \bar X^{(0)}$, we have
        \[
            \frac{1}{\nWorkers}\sum_{i \in [\nWorkers]} \norm[\Big]{\sum_{j \in [\nWorkers]}W_{ij}\mDelta_j^{(t-1)}}^2_F \leq (1 - \rho)\frac{1}{\nWorkers}\sum_{i \in [\nWorkers]}\norm{\mDelta_i^{(t-1)}}^2_F\,.
        \]        
    \end{lemma}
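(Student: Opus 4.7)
The plan is to reduce this to the classical spectral-gap consensus inequality by vectorizing the matrix-valued iterates. The key preparatory observation is that $\sum_{i=1}^N \mDelta_i^{(t-1)} = 0$: indeed, Lemma~\ref{lem:consesus-average} shows $\bar\mX^{(t-1)} = \bar\mX^{(0)}$, and so $\sum_i \mDelta_i^{(t-1)} = N(\bar\mX^{(t-1)} - \bar\mX^{(0)}) = 0$. This says that, viewed as an element of $(\R^{p\times q})^N$, the tuple $(\mDelta_1^{(t-1)},\dots,\mDelta_N^{(t-1)})$ is orthogonal to the all-ones direction in the first coordinate.

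First I would stack the $\mDelta_i^{(t-1)}$ into a matrix $\mDelta \in \R^{N \times pq}$ whose $i$-th row is the vectorization of $\mDelta_i^{(t-1)}$. Then the $i$-th row of $\mW \mDelta$ is exactly (the vectorization of) $\sum_{j=1}^N W_{ij} \mDelta_j^{(t-1)}$, so
\[
\sum_{i=1}^N \bigl\lVert \textstyle\sum_{j=1}^N W_{ij}\mDelta_j^{(t-1)} \bigr\rVert_F^2 \;=\; \lVert \mW \mDelta \rVert_F^2 \;=\; \sum_{k=1}^{pq} \lVert \mW \mDelta_{:,k} \rVert_2^2,
\]
where $\mDelta_{:,k}$ denotes the $k$-th column of $\mDelta$. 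By the preparatory observation, each column satisfies $\1^\top \mDelta_{:,k} = 0$.

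Next I would invoke assumption \ref{asm:mixing}: $\mW$ is symmetric with top eigenvalue $1$ corresponding to the eigenvector $\1$, and the remaining eigenvalues $\lambda_i$ satisfy $\lambda_i^2 \leq 1 - \rho$. Hence for any $\vv \in \R^N$ with $\1^\top \vv = 0$, one has $\lVert \mW \vv \rVert_2^2 = \vv^\top \mW^2 \vv \leq (1-\rho)\lVert \vv \rVert_2^2$. Applying this to each column $\mDelta_{:,k}$ gives
\[
\lVert \mW \mDelta \rVert_F^2 \;\leq\; (1-\rho) \sum_{k=1}^{pq} \lVert \mDelta_{:,k} \rVert_2^2 \;=\; (1-\rho) \lVert \mDelta \rVert_F^2 \;=\; (1-\rho) \sum_{i=1}^N \lVert \mDelta_i^{(t-1)} \rVert_F^2.
\]
Dividing both sides by $N$ yields exactly the claimed bound.

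There is no real obstacle here; the only subtlety is bookkeeping through the matrix-valued iterates, which I handle by vectorizing and applying the scalar spectral-gap bound column-by-column. Everything else is immediate from assumptions \ref{asm:mixing} and Lemma~\ref{lem:consesus-average}.
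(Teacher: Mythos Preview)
Your proof is correct and is precisely the standard mixing argument the paper defers to (it cites \cite{xiao2004fastlinear} without spelling out details). The vectorization, the use of Lemma~\ref{lem:consesus-average} to place each column of $\mDelta$ in $\mathbf{1}^\perp$, and the column-wise application of $\vv^\top \mW^2 \vv \le (1-\rho)\norm{\vv}^2$ on that subspace are exactly the ingredients of the classical proof.
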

    \begin{proof}
        Follows from standard mixing arguments such as in \citep{xiao2004fastlinear}.
    \end{proof}

    Averaging lemma~\ref{lem:consesus-compression} over the nodes $i$ and then applying Lemma~\ref{lem:consesus-mixing} gives
    \begin{align*}
        \frac{1}{\nWorkers}\sum_{i \in [\nWorkers]}\norm{\mDelta_i^{(t)}}^2_F &\leq    (1 - \delta)\frac{1}{\nWorkers}\sum_{i \in [\nWorkers]}\norm{\mDelta_i^{(t-1)}}^2_F + \delta \frac{1}{\nWorkers}\sum_{i \in [\nWorkers]} \norm[\Big]{\sum_{j \in [\nWorkers]}W_{ij}\mDelta_j^{(t-1)}}^2_F\\
        &\leq (1 - \delta + \delta(1-\rho))\frac{1}{\nWorkers}\sum_{i \in [\nWorkers]}\norm{\mDelta_i^{(t-1)}}^2_F \\
        &= (1 -\rho\delta)\frac{1}{\nWorkers}\sum_{i \in [\nWorkers]}\norm{\mDelta_i^{(t-1)}}^2_F\,.
    \end{align*}
    This proves the statement of Theorem~\ref{thm:compressed-consensus}.
    \qed

    \section{Compressed optimization (Proof of Theorem~\ref{thm:compressed-sgd})}
    We will use two main results proved in the previous section about our consensus step: that the average is preserved (Lemma~\ref{lem:consesus-average}), and that every step is a contraction in expectation (Theorem~\ref{thm:compressed-consensus}). Any consensus operator which satisfies these two properties directly ensures convergence of the stochastic optimization method by the proof technique of \citep{koloskova2020unified}. In particular, this shows that we satisfy Assumption 4 of \citep{koloskova2020unified} with $p = \rho \delta$. Replacing $p$ with $\rho\delta$ in their Theorem 2 yields the desired rates.
    \qed

\section{Experimental settings}
\label{apx:experimental-settings}

Tables \ref{tab:cifar-experimental-settings}, \ref{tab:wikitext-experimental-settings} and \ref{tab:consensus-experimental-settings} describe the implementation details of our experiments.

\begin{table}[ht]
    \caption{Default experimental settings for Cifar-10/ResNet-20 \citep[based on][]{koloskova2019deep}}
    \scriptsize%
    \label{tab:cifar-experimental-settings}%
    \begin{tabularx}{\linewidth}{lX}
        \toprule
        Dataset & Cifar-10 \\
        Data augmentation & random horizontal flip and random $32\times 32$ cropping \\
        Architecture & ResNet-20 \\
        Training objective & cross entropy \\
        Evaluation objective & top-1 accuracy \\
        \midrule
        Number of workers & 8 \\
        Topology & ring \\
        Network $W_{ij}$ & 0.436 for neighbors $i,j$, \enskip 0.128 if $i=j$, \enskip 0 otherwise \\
               & (optimized for largest spectral gap) \\
        Data & reshuffled between workers every epoch \\
        \midrule
        Batch size & $128 \times \text{number of workers}$ \\
        Momentum & 0.9 \\
        Learning rate & Tuned. PowerGossip uses the same as uncompressed centralized all-reduce. \\
        LR decay & $/10$ at epoch 150 and 250 \\
        LR warmup & Step-wise linearly within 5 epochs, starting from 0.1 \\
        \# Epochs & 300 \\
        Weight decay & $10^{-4}$, \enskip $0$ for BatchNorm parameters \\
        \midrule
        Repetitions & 6, with varying seeds \\
        Reported metric & Worst result of any worker of the worker's mean test accuracy over the last 5 epochs \\
        \bottomrule
    \end{tabularx}
\end{table}

\begin{table}[ht]
    \caption{Default experimental settings for WikiText-2 \citep[based on][]{vogels2019powersgd}}
    \label{tab:wikitext-experimental-settings}%
    \scriptsize%
    \begin{tabularx}{\linewidth}{lX}
        \toprule
        Dataset & Word-level WikiText-2 \\
        Tokenizer & Spacy \\
        Architecture & 3-layer LSTM \\
        Training objective & cross entropy \\
        Evaluation objective & cross entropy / perplexity \\
        \midrule
        Number of workers & 16 \\
        Topology & ring \\
        Network $W_{ij}$ & $\frac{1}{3}$ for neighbors $i,j$, \enskip $\frac{1}{3}$ if $i=j$, \enskip 0 otherwise \\
               & (common settings, worked better for DPSGD than weights used for Cifar-10) \\
        Data & Source text strictly divided into 16 equal chunks, always remain on worker\\
        \midrule
        Batch size & $64 \times \text{number of workers}$ \\
        Momentum & 0.0 \\
        Learning rate & Tuned. PowerGossip uses the same as uncompressed centralized all-reduce. \\
        LR decay & $/10$ at epoch 60 and 80 \\
        LR warmup & Step-wise linearly within 5 epochs, starting from 1.25 \\
        \# Epochs & 90 \\
        Weight decay & $0.0$ \\
        \midrule
        Repetitions & 2 \\
        Reported metric & Worst result of any worker of the worker's mean test cross entropy over the last 5 epochs \\
        \bottomrule
    \end{tabularx}
\end{table}

\begin{table}[ht]
    \caption{Experimental settings for Consensus}
    \label{tab:consensus-experimental-settings}%
    \scriptsize%
    \begin{tabularx}{\linewidth}{lX}
        \toprule
        Number of workers & 8 \\
        Topology & ring \\
        Network $W_{ij}$ & 0.436 for neighbors $i,j$, \enskip 0.128 if $i=j$, \enskip 0 otherwise \\
               & (optimized for largest spectral gap) \\
        \midrule 
        Data & $100\times 100$ random normal data \\
             & or 8 randomly selected $64\times 64$ faces from \citep{olivettifaces} \\
        Objective & minimize $\frac{1}{8}\sum_{i=1}^8 \left( \prm[t]{i} - \bar{X}^{(0)} \right)^2$ \\
        \bottomrule
    \end{tabularx}
\end{table}

\section{Convergence curves}
\label{apx:convergence-curves}

Below, we plot the convergence curves in terms of test accuracy, as a function of either gradient updates (epochs) or bits sent per worker.
In all our experiments, we have used a fixed number of epochs and a learning rate schedule that is common for full precision centralized training.
It is possible that experiments with high communication compression would benefit from more epochs or a slightly different learning rate schedule.

\subsection{ResNet-20 on Cifar-10}

\includegraphics{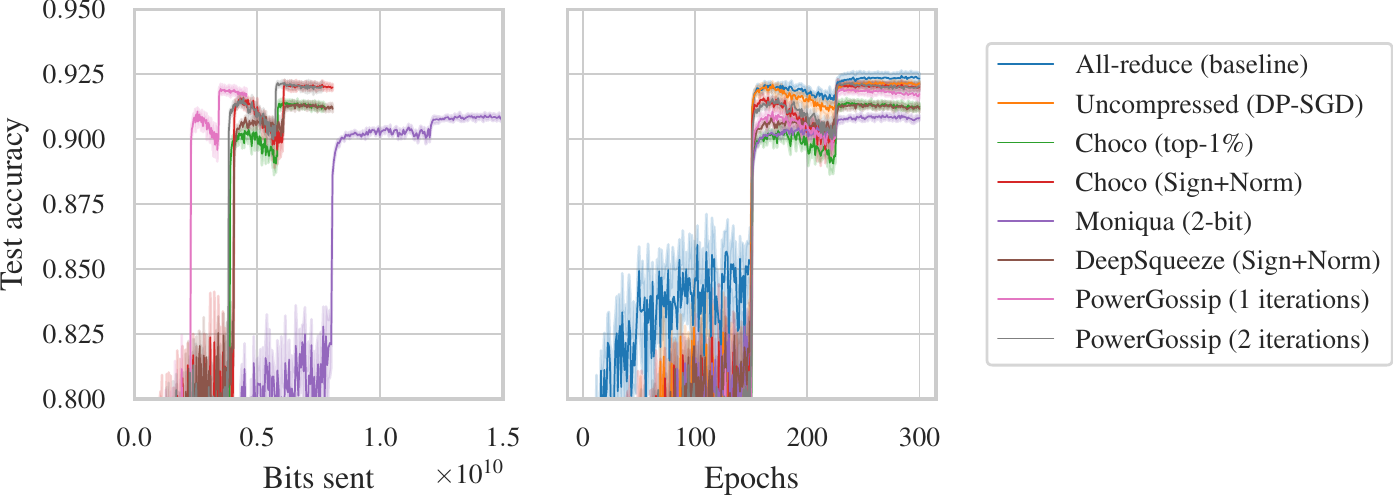}

\subsection{LSTM on WikiText-2}

\includegraphics{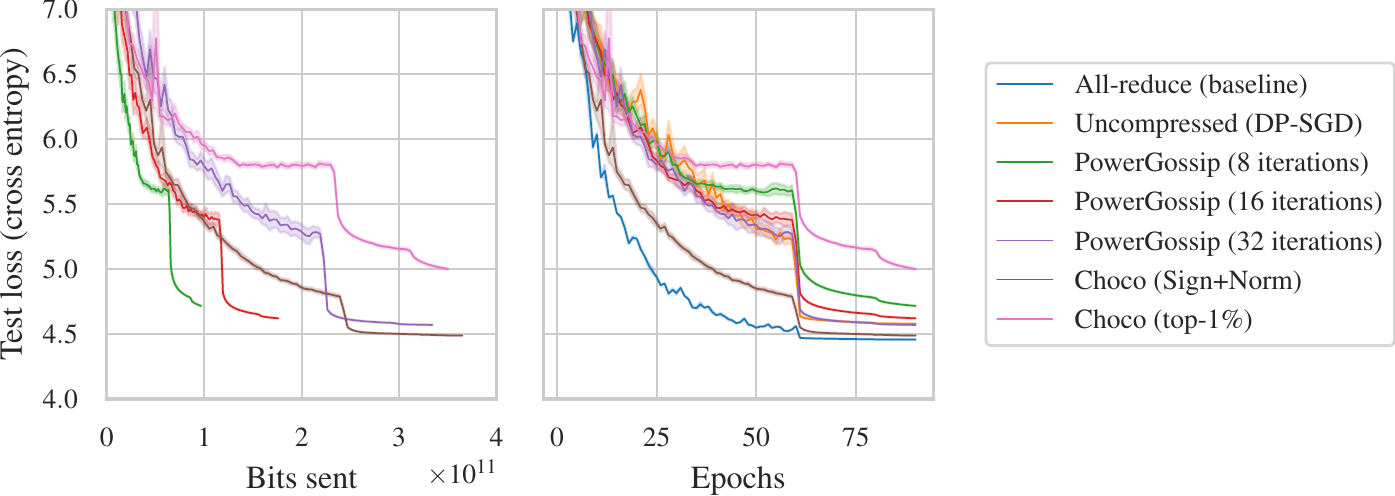}

\section{The power spectrum of parameter differences}

\subsection{LSTM Training}

The plots below show the power spectra of parameter differences observed while training the LSTM (Appendix~\ref{apx:architectures}).
We train with 16 workers connected in a ring, using PowerGossip with 32 power iterations per gradient update. 
During training, we record the power spectra of the differences between the parameters of connected workers 0-1, 4-5 and 8-9 at 4 different training stages.
Lines are averages of the spectra observed between the three worker pairs.

The power spectra change significantly over time, but at most stages, they show that a few singular vectors cary more weight than others. This structure can be exploited by PowerGossip with power iterations.
Especially in early training, the power spectra are peaky. This phase has been observed to be critical for successful training of non-convex models~\citep{frankle20earlytraining}.

\includegraphics[width=\textwidth]{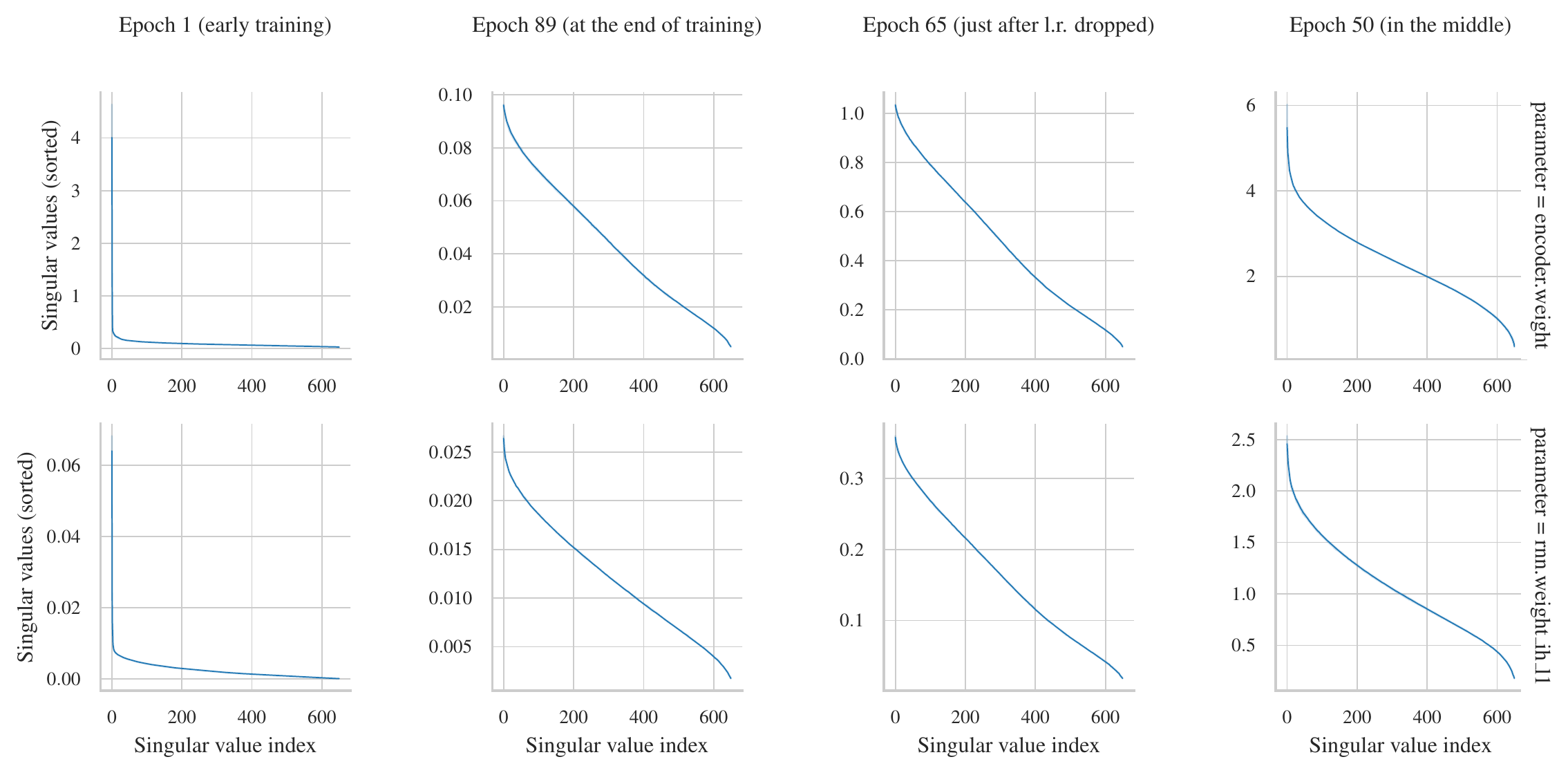}

\subsection{Consensus}

The effect of a peaky spectrum on PowerGossip shows in our \emph{consensus} experiments.
When we plot the spectra of parameter differences between neighboring workers at initialization, we see that faces from the Faces Database~\citep{olivettifaces} can be approximated better with a low-rank approximation than random normal matrices.
This is the reason why, in Figure~\ref{fig:consensus}, PowerGossip with power iterations is more efficient per-bit than uncompressed gossip for this dataset.

\begin{center}
\hspace{-7mm}%
\includegraphicscrop{figures/generated/consensus_spectrum}
\end{center}

\section{Changing rank vs changing \# power iterations}\label{apx:rank-vs-steps}

PowerSGD~\citep{vogels2019powersgd}, the algorithm on which PowerGossip is inspired,
control their compression rate by varying the rank of the low-rank approximations.
While this strategy is effective in terms of quality, it requires their projection matrices to be orthogonalized at every step of power iteration, rather than normalized.
This operation scales as the square of the approximation rank, and is reported to be the most expensive step of the algorithm. 
A second disadvantage of using a high rank is that the memory required to store previous low-rank approximations scales linearly with the rank as well.

In PowerGossip, we adopt an alternative approach where we use multiple rank-1 power iteration steps per gradient update instead of one step with higher accuracy.
In the table below, we show that this alteration has no impact on the performance of our method, evaluated with a fixed budget of 90 epochs on WikiText-2 language modeling.
For the same total communication budget, we reach similar test loss.

\begin{minipage}{.7\textwidth}
\tablefontsize
\begin{tabularx}{\textwidth}{l l l X}
    \toprule
    Sent/epoch
    & PowerGossip rank
    & Num.\ power iterations
    & WikiText-2 test loss \\
    \midrule
    127 MB
    & 1
    & 8
    & 4.73 
        \tikz{
            \draw[white,line width=5pt] (0,0) -- (2.2,0);
            \draw[gray,line width=.3pt] (0,0) -- (2.2,0);
            \draw[line width=.6pt] (1.149075971330913, 0) -- (1.149075971330913, 0);
            \draw[line width=.6pt] (1.149075971330913, -2pt) -- (1.149075971330913, 2pt);
        } \\
    230 MB
    & 1
    & 16
    & 4.63 
        \tikz{
            \draw[white,line width=5pt] (0,0) -- (2.2,0);
            \draw[gray,line width=.3pt] (0,0) -- (2.2,0);
            \draw[line width=.6pt] (0.7406566687992624, 0) -- (0.7406566687992624, 0);
            \draw[line width=.6pt] (0.7406566687992624, -2pt) -- (0.7406566687992624, 2pt);
        } \\
    437 MB
    & 1
    & 32
    & 4.58 
        \tikz{
            \draw[white,line width=5pt] (0,0) -- (2.2,0);
            \draw[gray,line width=.3pt] (0,0) -- (2.2,0);
            \draw[line width=.6pt] (0.5354124001094263, 0) -- (0.5354124001094263, 0);
            \draw[line width=.6pt] (0.5354124001094263, -2pt) -- (0.5354124001094263, 2pt);
        } \\
    
    & 2
    & 16
    & 4.58 
        \tikz{
            \draw[white,line width=5pt] (0,0) -- (2.2,0);
            \draw[gray,line width=.3pt] (0,0) -- (2.2,0);
            \draw[line width=.6pt] (0.5320097582680821, 0) -- (0.5320097582680821, 0);
            \draw[line width=.6pt] (0.5320097582680821, -2pt) -- (0.5320097582680821, 2pt);
        } \\
    
    & 4
    & 8
    & 4.58 
        \tikz{
            \draw[white,line width=5pt] (0,0) -- (2.2,0);
            \draw[gray,line width=.3pt] (0,0) -- (2.2,0);
            \draw[line width=.6pt] (0.5560657841818648, 0) -- (0.5560657841818648, 0);
            \draw[line width=.6pt] (0.5560657841818648, -2pt) -- (0.5560657841818648, 2pt);
        } \\
    
    & 8
    & 4
    & 4.58 
        \tikz{
            \draw[white,line width=5pt] (0,0) -- (2.2,0);
            \draw[gray,line width=.3pt] (0,0) -- (2.2,0);
            \draw[line width=.6pt] (0.533328178950718, 0) -- (0.533328178950718, 0);
            \draw[line width=.6pt] (0.533328178950718, -2pt) -- (0.533328178950718, 2pt);
        } \\
    \bottomrule
\end{tabularx}
\end{minipage}

\section{Hyperparameters}
\label{apx:hyperparameters}

\subsection{Consensus}

In Figure~\ref{fig:consensus}, we plot results obtained with two compressors in ChocoGossip~\cite{koloskova2019choco}, using 20 consensus step size parameters $\gamma$ ranging from $7.6 \times 10^{-5}$ to $1$ on an exponential grid.
The optimal hyperparameter depends on the compressor used.

\subsection{ResNet-20 on Cifar-10}

The table below specifies the optimizer-specific hyperparameters that we used in our experiments.
For our baselines DeepSqueeze and ChocoSGD, we use tuned hyperparameters from \cite{koloskova2019deep}.

\tiny
\begin{tabularx}{\textwidth}{X ll ll ll}
    \toprule
    & \multicolumn{2}{l}{Learning rate $\eta$}
    & \multicolumn{2}{l}{Consensus rate $\gamma$} 
    & \multicolumn{2}{l}{Modulo parameter $\theta$} \\
    \cmidrule(lr){2-3} \cmidrule(lr){4-5} \cmidrule(lr){6-7}
    Method
    & Tested & Used
    & Tested & Used
    & Tested & Used \\
    \cmidrule(lr){1-1} \cmidrule(lr){2-2} \cmidrule(lr){3-3} \cmidrule(lr){4-4} \cmidrule(lr){5-5} \cmidrule(lr){6-6} \cmidrule(lr){7-7}
    All-reduce (baseline) 
    & $\{0.8, 1.13, 1.6\}$ & 1.13 \\
    Uncompressed \dpsgd
    & $\{0.8, 1.13, 1.6\}$ & 1.13 \\
    Choco (top-1\%)
    & $\{0.96, 1.2, 1.6\}^\star$ & 1.13 
    & $\{0.025,0.0375,0.075,0.15\}^\star$ &  0.0375 \\
    Choco (Sign+Norm)
    & $\{1.2, 1.6, 2.4\}^\star$ & 1.6
    & $\{0.15, 0.2, 0.45, 1\}^\star$ &  0.45 \\
    Moniqua (2-bit) 
    & $\{0.1, 0.2, 0.4, 0.8\}$ & 0.4
    & $\{0.01, 0.005, 0.0025, 0.0012\}^\dagger$ & 0.005
    & $\{0.125, 0.25, 0.5\}$ & 0.25 \\
    DeepSqueeze (Sign+Norm) 
    & $\{0.24, 0.48,  0.96\}$ & 0.48
    & $\{ 0.005, 0.01, 0.05\}^\star$ & 0.01 \\
    PowerGossip (1 iteration)
    & & 11.3 \\
    PowerGossip (2 iterations)
    & & 11.3 \\
    \bottomrule
\end{tabularx}
\normalsize

$\star$: based on published tuned parameters and the tuning strategy from the authors of ChocoSGD~\citep{koloskova2019deep}.

$\dagger$: the concensus step size was tuned after the other parameters, not in a full grid.

\subsection{LSTM on WikiText-2}

The table below specifies the optimizer-specific hyperparameters that we used in our experiments.

\tiny
\begin{tabularx}{\textwidth}{X ll ll ll}
    \toprule
    & \multicolumn{2}{l}{Learning rate $\eta$}
    & \multicolumn{2}{l}{Consensus rate $\gamma$} 
    & \multicolumn{2}{l}{Modulo parameter $\theta$} \\
    \cmidrule(lr){2-3} \cmidrule(lr){4-5} \cmidrule(lr){6-7}
    Method
    & Tested & Used
    & Tested & Used
    & Tested & Used \\
    \cmidrule(lr){1-1} \cmidrule(lr){2-2} \cmidrule(lr){3-3} \cmidrule(lr){4-4} \cmidrule(lr){5-5} \cmidrule(lr){6-6} \cmidrule(lr){7-7}
    All-reduce (baseline) 
    & $\{15,20,27.5,35,47.5\}$ & 47.5 \\
    Uncompressed \dpsgd 
    & $\{15,20,27.5,35,47.5\}$ & 47.5 \\
    Choco (top-1\%)$^\dagger$ 
    & $\{47.5\}$ & 
    & $\{0.01, 0.1, 0.2, 0.4, 0.8\}$ & \\
    Choco (Sign+Norm)
    & $\{35,47.5\}$ & 47.5 
    & $\{0.4, 0.6, 0.8, 1.0\}$ & 0.8 \\
    PowerGossip ($\star$ iterations) 
    & & 47.5 \\
    \bottomrule
\end{tabularx}
\normalsize

$\dagger$: did not converge. We did not report this result, as more tuning may help.

\section{Compared-to algorithm implementations}
\label{apx:compared-to-algorithms}

In the sections below, we describe the implementation details of the algorithms we compare to. We provide the code for our implementations on Github (after deanonimization).

\subsection{ChocoSGD}

We implement Algorithm 1 of \citep{koloskova2019deep}, which differs slightly from Algorithm 2 in \citep{koloskova2019choco}, in that it executes consensus steps and gradient updates in parallel like \dpsgd.

We use three compressors in our experiments. As customary, we compress each tensor parameter of our neural networks separately.
\begin{itemize}[leftmargin=2\baselineskip]
    \item \textbf{Sign+Norm}\quad $Q(\xx) = \text{sign}(\xx) \cdot \frac{\norm{\xx}_1}{\text{length}(\xx)}$. We confirm the author's observations that this compressor gives the best and most reliable results.
    \item \textbf{top-1\%}\quad Let $p_{99}(\xx)$ represent the 99th percentile of coordinates in $\xx$ by absolute value. Here
    \begin{align*}
        Q(\xx)_i = \xx_i \text{ if } \xx_i \geq p_{99}(\xx) \text{, } 0 \text{ otherwise}.
    \end{align*}
    To communicate the top 1\% of a vector, we communicate 32-bit float values and 64-bit integer indices, following the authors.
    \item \textbf{SVD}\quad This low-rank compressor has not been used with ChocoSGD, but we have evaluated it because our proposed method is also based on low-rank compression. This compressor represents a matrix $X$ by $(X\vv) \vv^\top$, where $\vv$ is the (normalized) top right singular vector found by a Singular Value Decomposition (SVD).
\end{itemize}

\subsection{DeepSqueeze}

We implement DeepSqueeze according to Algorithm~1 in \citep{tang2019deepsqueeze}, and use the same compressors described for ChocoSGD above.

\subsection{Moniqua}

Because the 1-bit version of Moniqua~\citep{yu2020moniqua} is derived from the 2-bit version with added BZIP compression, we focus on the 2-bit version.
We implement the algorithm according to Algorithm~1 in \citep{yu2020moniqua}.
We use the same step size schedule $\{\alpha_k\}$ as for the optimizers we evaluated, and tune the a priori bound $\theta$ as a gobal constant, as suggested by the authors.
As a stochastic rounding operator $\cQ$, we quantize stochastically in an unbiased fashion to the points $\{-\frac{1}{2}, -\frac{1}{6}, \frac{1}{6}, \frac{1}{2}\}$. This yields $\delta=\frac{1}{3}$.
Note that the modulo operator `$\text{mod } B_{\theta}$' in the algorithm yields values between $-\frac{1}{2}B_{\theta}$ and $\frac{1}{2}B_{\theta}$.

\section{Parameters in architectures}
\label{apx:architectures}

See Table~\ref{tab:ResNet20_parameters} and Table~\ref{tab:LSTM_parameters} for an overview of parameters in the models used.

\begin{table}[ht]
\caption{Parameters in the ResNet20 architecture and their shapes. The table shows the per-tensor compression ratio achieved by rank-1 PowerGossip with $r$ iterations.}
\label{tab:ResNet20_parameters}
\vspace{1mm}
\small%
\begin{tabularx}{\linewidth}{Xllrr}
\toprule
Parameter
& Parameter shape
& Matrix shape
& Uncompressed
& Compression \\
\cmidrule(lr){1-5}layer3.1.conv1
& $64 \times 64 \times 3 \times 3$
& $64 \times 576$
& 144 KB
& $115/r \; \times$ \\layer3.2.conv1
& $64 \times 64 \times 3 \times 3$
& $64 \times 576$
& 144 KB
& $115/r \; \times$ \\layer3.0.conv2
& $64 \times 64 \times 3 \times 3$
& $64 \times 576$
& 144 KB
& $115/r \; \times$ \\layer3.1.conv2
& $64 \times 64 \times 3 \times 3$
& $64 \times 576$
& 144 KB
& $115/r \; \times$ \\layer3.2.conv2
& $64 \times 64 \times 3 \times 3$
& $64 \times 576$
& 144 KB
& $115/r \; \times$ \\layer3.0.conv1
& $64 \times 32 \times 3 \times 3$
& $64 \times 288$
& 72 KB
& $105/r \; \times$ \\layer2.2.conv2
& $32 \times 32 \times 3 \times 3$
& $32 \times 288$
& 36 KB
& $58/r \; \times$ \\layer2.1.conv1
& $32 \times 32 \times 3 \times 3$
& $32 \times 288$
& 36 KB
& $58/r \; \times$ \\layer2.0.conv2
& $32 \times 32 \times 3 \times 3$
& $32 \times 288$
& 36 KB
& $58/r \; \times$ \\layer2.1.conv2
& $32 \times 32 \times 3 \times 3$
& $32 \times 288$
& 36 KB
& $58/r \; \times$ \\layer2.2.conv1
& $32 \times 32 \times 3 \times 3$
& $32 \times 288$
& 36 KB
& $58/r \; \times$ \\layer2.0.conv1
& $32 \times 16 \times 3 \times 3$
& $32 \times 144$
& 18 KB
& $52/r \; \times$ \\layer1.1.conv1
& $16 \times 16 \times 3 \times 3$
& $16 \times 144$
& 9 KB
& $29/r \; \times$ \\layer1.1.conv2
& $16 \times 16 \times 3 \times 3$
& $16 \times 144$
& 9 KB
& $29/r \; \times$ \\layer1.0.conv2
& $16 \times 16 \times 3 \times 3$
& $16 \times 144$
& 9 KB
& $29/r \; \times$ \\layer1.2.conv1
& $16 \times 16 \times 3 \times 3$
& $16 \times 144$
& 9 KB
& $29/r \; \times$ \\layer1.0.conv1
& $16 \times 16 \times 3 \times 3$
& $16 \times 144$
& 9 KB
& $29/r \; \times$ \\layer1.2.conv2
& $16 \times 16 \times 3 \times 3$
& $16 \times 144$
& 9 KB
& $29/r \; \times$ \\layer3.0.downsample.0
& $64 \times 32 \times 1 \times 1$
& $64 \times 32$
& 8 KB
& $43/r \; \times$ \\fc
& $10 \times 64$
& $10 \times 64$
& 2 KB
& $17/r \; \times$ \\layer2.0.downsample.0
& $32 \times 16 \times 1 \times 1$
& $32 \times 16$
& 2 KB
& $21/r \; \times$ \\conv1
& $16 \times 3 \times 3 \times 3$
& $16 \times 27$
& 2 KB
& $20/r \; \times$ \\Bias vectors (total)
&
&
& 6 KB
& None \\
\bottomrule
\end{tabularx}
\end{table}

\begin{table}[ht]
\caption{Parameters in the LSTM architecture and their shapes. The table shows the per-tensor compression ratio achieved by rank-1 PowerGossip with $r$ iterations.}
\label{tab:LSTM_parameters}
\vspace{1mm}
\small%
\begin{tabularx}{\linewidth}{Xllrr}
\toprule
Parameter
& Parameter shape
& Matrix shape
& Uncompressed
& Compression \\
\cmidrule(lr){1-5}encoder
& $28869 \times 650$
& $28869 \times 650$
& 73300 KB
& $1271/r \; \times$ \\rnn-ih-l0
& $2600 \times 650$
& $2600 \times 650$
& 6602 KB
& $1040/r \; \times$ \\rnn-hh-l0
& $2600 \times 650$
& $2600 \times 650$
& 6602 KB
& $1040/r \; \times$ \\rnn-ih-l1
& $2600 \times 650$
& $2600 \times 650$
& 6602 KB
& $1040/r \; \times$ \\rnn-hh-l1
& $2600 \times 650$
& $2600 \times 650$
& 6602 KB
& $1040/r \; \times$ \\rnn-ih-l2
& $2600 \times 650$
& $2600 \times 650$
& 6602 KB
& $1040/r \; \times$ \\rnn-hh-l2
& $2600 \times 650$
& $2600 \times 650$
& 6602 KB
& $1040/r \; \times$ \\Bias vectors (total)
&
&
& 174 KB
& None \\
\bottomrule
\end{tabularx}
\end{table}

\section{Experiment runtime and compute infrastructure}
\label{apx:compute-infrastructure}

We have executed our deep learning experiments on Nvidia Tesla K80 GPUs on \texttt{n1-}series virtual machines on Google Cloud.
The algorithms were implemented in PyTorch, and run using a custom build that includes MPI for decentralized communication.
We refer to the supplemental code for additional details on our runtime environment.

For our LSTM experiments with 16 workers, we use 4 GPUs with 4 processes per GPU. The experiments took approximately 4 hours in this setup.

For our Cifar-10 experiments with 8 workers, use 2 GPUs with 4 processes each. Those experiments took around 1.5 hours.

\end{document}